\newcolumntype{P}[1]{>{\centering\arraybackslash}p{#1}}
\newcolumntype{M}[1]{>{\centering\arraybackslash}m{#1}}
\DeclareMathAlphabet{\mathbb}{U}{bbold}{m}{n}
\DeclareMathOperator{\argmax}{arg\,max}
\newtheorem{theorem}{Theorem}
\newtheorem{lemma}{Lemma}
\newtheorem{corollary}{Corollary}
\newtheorem{definition}{Definition}
\begin{document}

\title{Effective and Imperceptible Adversarial Textual Attack via Multi-objectivization}

\author{Shengcai Liu}
\email{liusc3@sustech.edu.cn}
\affiliation{
	\department{Research Institute of Trustworthy Autonomous Systems}
	\institution{Southern University of Science and Technology}
	\city{Shenzhen}
	\country{China}
	\postcode{518055}
}

\author{Ning Lu}
\email{11610310@mail.sustech.edu.cn}

\author{Wenjing Hong}
\email{hongwj@sustech.edu.cn}
\affiliation{
	\department{Department of Computer Science and Engineering}
	\institution{Southern University of Science and Technology}
	\city{Shenzhen}
	\country{China}
	\postcode{518055}
}

\author{Chao Qian}
\email{qianc@lamda.nju.edu.cn}
\affiliation{
	\department{National Key Laboratory for Novel Software Technology}
	\institution{Nanjing University}
	\city{Nanjing}
	\country{China}
	\postcode{210023}
}

\author{Ke Tang}
\authornote{Corresponding Author.}
\email{tangk3@sustech.edu.cn}
\affiliation{
 \department{Research Institute of Trustworthy Autonomous Systems}
 \institution{Southern University of Science and Technology}
  \city{Shenzhen}
  \country{China}
  \postcode{518055}
}


\begin{abstract}
The field of adversarial textual attack has significantly grown over the last few years, where the commonly considered objective is to craft adversarial examples (AEs) that can successfully fool the target model.
However, the imperceptibility of attacks, which is also essential for practical attackers, is often left out by previous studies.
In consequence, the crafted AEs tend to have obvious structural and semantic differences from the original human-written text, making them easily perceptible.
In this work, we advocate leveraging multi-objectivization to address such issue.
Specifically, we reformulate the problem of crafting AEs as a multi-objective optimization problem, where the attack imperceptibility is considered as an auxiliary objective.
Then, we propose a simple yet effective evolutionary algorithm, dubbed HydraText, to solve this problem.
To the best of our knowledge, HydraText is currently the only approach that can be effectively applied to both score-based and decision-based attack settings.
Exhaustive experiments involving 44237 instances demonstrate that HydraText consistently achieves competitive attack success rates and better attack imperceptibility than the recently proposed attack approaches.
A human evaluation study also shows that the AEs crafted by HydraText are more indistinguishable from human-written text.
Finally, these AEs exhibit good transferability and can bring notable robustness improvement to the target model by adversarial training.
\end{abstract}

\begin{CCSXML}
	<ccs2012>
	<concept>
	<concept_id>10010147.10010178.10010179</concept_id>
	<concept_desc>Computing methodologies~Natural language processing</concept_desc>
	<concept_significance>500</concept_significance>
	</concept>
	<concept>
	<concept_id>10002978</concept_id>
	<concept_desc>Security and privacy</concept_desc>
	<concept_significance>300</concept_significance>
	</concept>
	<concept>
	<concept_id>10003752.10003809.10003716.10011136.10011797.10011799</concept_id>
	<concept_desc>Theory of computation~Evolutionary algorithms</concept_desc>
	<concept_significance>500</concept_significance>
	</concept>
	<concept>
	<concept_id>10010147.10010257.10010258.10010261.10010276</concept_id>
	<concept_desc>Computing methodologies~Adversarial learning</concept_desc>
	<concept_significance>500</concept_significance>
	</concept>
	</ccs2012>
\end{CCSXML}

\ccsdesc[500]{Computing methodologies~Natural language processing}
\ccsdesc[300]{Security and privacy}
\ccsdesc[500]{Theory of computation~Evolutionary algorithms}
\ccsdesc[500]{Computing methodologies~Adversarial learning}

\keywords{adversarial textual attack, attack imperceptibility, multi-objectivization, multi-objective optimization}

\received{20 February 2007}
\received[revised]{12 March 2009}
\received[accepted]{5 June 2009}

\maketitle

\newpage

\section{Introduction}
Deep Neural Networks (DNNs) have exhibited vulnerability to \textit{adversarial examples} (AEs) \cite{GoodfellowSS15,SuVS19}, which are crafted by maliciously perturbing the original input to fool the target model.
In the field of Natural Language Processing (NLP), there has been much evidence demonstrating that AEs pose serious threats to security-critical applications.
For examples, spam emails modified by ham and word injection can deceive plenty of machine learning-based security mechanisms \cite{KuchipudiNL20};
alteration of raw bytes of Portable Executable (PE) files can result in the evasion of malware detection systems \cite{GrossePMBM17}.
On the other hand, from the perspective of developing robust systems, AEs can help reveal the weakness of DNNs, and can also be used to improve the robustness of DNNs when included into the training data \cite{WallaceFKGS19}.
Therefore, recently there has been rapidly increasing research interest in devising textual attacks for various NLP tasks \cite{ZhangSAL20}.

According to the accessibility to the target model, textual attacks can be categorized as \textit{white-box} and \textit{black-box} attacks.
White-box attacks require full knowledge of the target model to perform gradient computation \cite{EbrahimiRLD18}.
However, in real-world applications (e.g., online web service such as Google Translate), the model internals are often kept secret.
This fact gives rise to the black-box attacks that only require the output of the target model, which can be further classified into \textit{score-based} \cite{AlzantotSEHSC18} and \textit{decision-based} \cite{Maheshwary2020} attacks.
The former require the class probabilities or confidence scores of the target model, while the latter only need the top label predicted by the model.
Since black-box attacks are generally making weaker and more realistic assumptions than white-box attacks, in this work we focus on black-box score-based and decision-based settings.

Due to the discrete nature of text, crafting textual attacks is challenging, especially in the black-box setting.
Unlike images where small perturbations often have limited impact on the overall quality, for text even slight character-level perturbations can easily break the grammaticality \cite{PruthiDL19}.
For this reason, word substitution has become the dominating paradigm in textual attack, since it can craft AEs with good grammaticality. 
The basic idea of word substitution is to fool the target model by replacing words in the original input with some substitutes.
Concretely, word substitution-based approaches \cite{AlzantotSEHSC18,RenDHC19,Maheshwary2020,ZangQYLZLS20,JinJZS20,LiuLCT2021,Zhou2022} generally adopt a two-step strategy to craft an AE based on an original input $\mathbf{x}_{ori}$.
As shown in Figure~\ref{fig:substitute}, at the first step, they construct a set of candidate substitutes for each word in $\mathbf{x}_{ori}$, resulting in a combinatorial search space of AEs.
To maintain grammaticality of the potential AEs, it is common to use synonyms \cite{AlzantotSEHSC18,RenDHC19,Maheshwary2020} or the words with the same sememes \cite{ZangQYLZLS20} as the candidate substitutes, since they have the same Part-of-Speech (POS) as the original words.\footnote{In grammar, a POS is a category of words (or, more generally, of lexical items) that have similar grammatical properties. Commonly listed English POS are noun, verb, adjective, adverb and so on.} 
{\color{black}
Then at the second step, these approaches would search over the space to find an AE that maximizes some objective function $f$ measuring how well the AE can fool the target model:
\begin{equation}
	\label{eq:single_obj_noconstraint}
	\max_{\mathbf{x} \in \mathbf{X}} f(\mathbf{x}),
\end{equation}
where $\mathbf{X}$ and $\mathbf{x} \in \mathbf{X}$ denote the search space of AEs and an AE, respectively.}

\begin{figure}[tbp]
	\centering
	\scalebox{1.0}{
		\includegraphics[width=1.0\columnwidth]{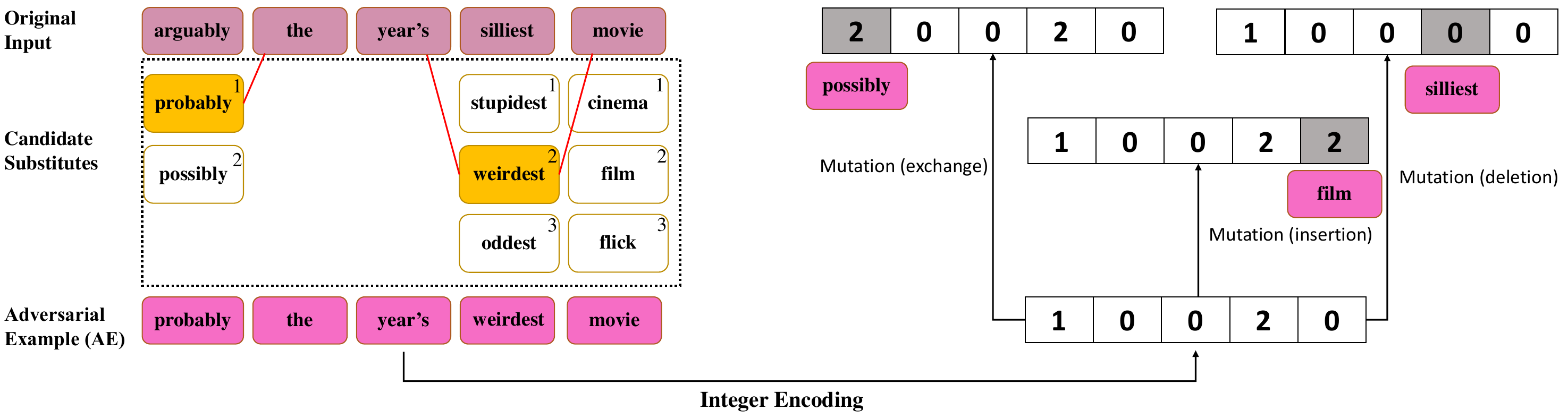}}
	\caption{\color{black}\textbf{On the left side} is an example from the MovieReview (MR) dataset \cite{PangL05}, illustrating the two-step strategy employed by word substitution-based textual attack approaches. The selected substituted words are indicated in yellow; the numbers in the upper right corner of the substituted words indicate their indices.
		\textbf{On the right side} is the solution encoding corresponding to the AE on the left and the solutions generated by the three mutation operators.
		The bits changed by mutation are indicated in gray, and the corresponding substituted words after mutation are also displayed.}
	\label{fig:substitute}
\end{figure}

{\color{black}{Over the past few years, there has been much evidence \cite{RenDHC19,ZangQYLZLS20,LiuLCT2021} showing that word substitution-based approaches can be highly effective in crafting AEs to successfully fool the target model.
However, there still remains a main issue regarding the imperceptibility of attacks.
In brief, attack imperceptibility refers to how likely the crafted AEs are not to be perceived by detectors or humans \cite{DaiL0023}.
Recall that Problem~(\ref{eq:single_obj_noconstraint}) only involves the objective of fooling the target model.
To solve this problem, existing approaches tend to replace an excessive number of words in $\mathbf{x}_{ori}$, or replace the original words with some substitutes that bear significantly different meanings (see Figure~\ref{fig:substitute} and Table~\ref{tab:cases} for some examples).
That is, the crafted AE $\mathbf{x}$ has obvious structural and semantic differences from the original input $\mathbf{x}_{ori}$.}}
In consequence, $\mathbf{x}$ can easily break human prediction consistency (i.e., $\mathbf{x}$ should have the exactly same human predictions as $\mathbf{x}_{ori}$), semantics consistency (i.e., $\mathbf{x}$ should have the same meanings as $\mathbf{x}_{ori}$), and text quality (i.e., naturality).
In real-world applications, such AEs are difficult to pass automatic detectors for anomalous input \cite{Xurong2019,PruthiDL19}, not to mention human judgment.


{\color{black}
A remedy to the above issue is to incorporate a constraint on the similarity between $\mathbf{x}$ and $\mathbf{x}_{ori}$ into the optimization problem:
\begin{equation}
	\label{eq:single_obj_constraint}
	\max_{\mathbf{x} \in \mathbf{X}} f(\mathbf{x})  \ \ \mathrm{s.t.}\ \text{sim}(\mathbf{x},\mathbf{x}_{ori}) > \delta,
\end{equation}
where $\text{sim}(\cdot,\cdot)$ refers to the similarity measure.
One possible way to implement $\text{sim}(\cdot,\cdot)$ is to count the number of common words in $\mathbf{x}$ and $\mathbf{x}_{ori}$.
However, Problem~(\ref{eq:single_obj_constraint}) still has an issue regarding how to set the threshold $\delta$.
It is conceivable that if the value of $\delta$ is too high, it might result in the nonexistence of feasible solutions; whereas if the value of $\delta$ is too low, it might render the similarity constraint practically ineffective.
Further, the appropriate value of $\delta$ actually depends on $\mathbf{x}_{ori}$.
To illustrate, for two $\mathbf{x}_{ori}$s of length 10 and length 100, respectively, the threshold $\delta$ (if implemented using the number of common words) should be set differently.

In this work, we propose to reformulate Problem~(\ref{eq:single_obj_constraint}) into a multi-objective optimization problem (MOP), where the imperceptibility of attacks (i.e., similarity between $\mathbf{x}$ and $\mathbf{x}_{ori}$) is considered as an additional objective.
In the literature, reformulating a single-objective optimization problem (SOP) into a MOP is often referred to as multi-objectivization~\cite{MaHuangLiQiWangZhu2021}.
Compared to Problem~(\ref{eq:single_obj_constraint}), our MOP formulation naturally eliminates the need of (manually) setting the value of $\delta$.
We propose a simple yet effective evolutionary algorithm (EA), dubbed HydraText, to solve the MOP.\footnote{The name is inspired by the Lernaean Hydra, a mythological beast that uses multiple heads to attack its adversaries.}
HydraText can efficiently identify a set of Pareto-optimal AEs, representing various trade-offs between the ability to deceive the target model and the attack imperceptibility.
Furthermore, we demonstrate that determining the final AE from this set is straightforward.}

We conduct a large-scale evaluation of HydraText by using it to attack five modern NLP models across five datasets.
The whole experiments involve \textit{44237} instances in total.
The results show that, compared to five recently proposed textual attack approaches,  HydraText consistently achieves competitive attack success rate and better attack imperceptibility.
A human evaluation study is also conducted to demonstrate that the AEs crafted by HydraText maintain better validity and naturality than the baselines, making the former more indistinguishable from human-written text.
Finally, these AEs can also transfer well to unseen models and can help improve the robustness of the target models by adversarial training.

To the best of our knowledge, HydraText is the \textit{first} approach that utilizes multi-objectivization to adversarial textual attack, and also currently the \textit{only} approach that can effectively craft high-quality AEs in both score-based and decision-based attack settings.
It provides a novel direction to integrate multiple conflicting objectives into the generation of AEs, where more objectives such as fluency can  be further incorporated.
The remainder of the paper is organized as follows.
Section~\ref{sec:related} briefly reviews the related works.
Section~\ref{sec:prob_form} details the formulation of the MOP.
Section~\ref{sec:method} presents the proposed algorithm HydraText.
Section~\ref{sec:exp} presents the experiments, as well as the human evaluation study.
Finally, Section~\ref{sec:conclusion} concludes the paper and discusses potential future directions.

\section{Related Work}
\label{sec:related}
In this section, we briefly review the field of adversarial textual attack, the EAs used in this field, as well as the multi-objectivization in Evolutionary Computation.

\subsection{Adversarial Textual Attack}
As aforementioned, adversarial textual attacks can be broadly classified into white-box and black-box attacks.
White-box attacks require full knowledge of the target model to exploit the gradient of the network with respect to input perturbation.
They are mostly inspired by the fast gradient sign method (FSGM) \cite{GoodfellowSS15} and the forward derivative method (FDM) \cite{PapernotMSH16} which were proposed for attacking image-based systems.
Since gradient computation cannot be directly done on the input text, but only on the embeddings,
some adaptations are necessary for perturbing discrete text \cite{PapernotMSH16,SatoSS018,LSBLS18,WallaceFKGS19,EbrahimiRLD18}.
However, in real-world scenarios where the target model is not fully accessible, white-box attacks are not applicable.
 
Compared to white-box attacks, black-box attacks do not require internals of the target model, but only its output.
According to the amount of information exposed to the attackers from the model output, black-box attacks can be further categorized as score-based and decision-based attacks.
Score-based attacks leverage the confidence scores (or predicted probabilities) of the target model to guide the generation of AEs.
As aforementioned, they are mostly based on word substitution which crafts AEs by replacing words in the original input with some substitutes.
Specifically, after generating candidate substitutes for the words in the original input, these approaches would search over the space of AEs to find one that maximizes the objective function, which assesses how well a given AE can fool the target model.
For example, one commonly used objective function is one minus the predicted probability on the ground truth.
To maximize it, previous studies have used population-based search algorithms such as genetic algorithm (GA) \cite{AlzantotSEHSC18,Zhou2022} and particle swarm optimization (PSO) \cite{ZangQYLZLS20}.
Other approaches mainly use simple heuristics such as importance-based greedy substitution \cite{RenDHC19,JinJZS20} which first sorts words according to their importance and then finds the best substitute for each word in turn.
Note for these approaches the only goal considered during the search for AEs is to successfully fool the target model, while attack imperceptibility is not directly optimized.

To the best of our knowledge, there is only one score-based attack \cite{Mathai2020} that considers simultaneous optimization of multiple objectives during the search for AEs.
However, this work is fundamentally different from ours in both problem formulation and optimization algorithm.
Besides, our approach can also be applied to decision-based setting.

Compared to score-based setting, decision-based setting is more challenging since only the top label predicted by the model is available.
Currently, only a few textual attack approaches are decision-based, e.g., performing perturbations in the continuous latent semantic space \cite{ZhaoDS18} and sentence-level rewriting \cite{SinghGR18}.
Recently, a word substitution-based attack approach was proposed in \cite{Maheshwary2020}, and has achieved the state-of-the-art performance in generating AEs in decision-based setting.
This approach first initializes an AE by randomly replacing words in the original input.
Since this initial AE is very likely to be quite different from the original input, it then uses a GA to maximize the similarity between the former and the latter.
In spirit, this approach is similar to ours because both of them directly consider maximizing the attack imperceptibility  (i.e., maximizing the similarity between AEs and the original input).
However, the approach in \cite{Maheshwary2020} restricts the search space to only AEs that successfully fool the target model.
In contrast, our approach has no such restriction, meaning it involves a larger search space than \cite{Maheshwary2020} and can achieve better attack performance regarding both attack success rate and attack imperceptibility (see the experimental results in Section~\ref{sec:exp}).

\subsection{Evolutionary Algorithms for Adversarial Textual Attack}
Over the past few decades, EAs have achieved great success in solving complicated optimization problems \cite{LiuWTQY15,LorandiCI21,LiuT020,DushatskiyAB21,TangLYY21,LU2023101377,LiuTY21}.
Inspired by natural evolution, the essence of an EA is to represent the solutions as individuals in a population, produce offspring through variations, and select appropriate solutions according to their fitness \cite{KAD2005}.
As powerful search algorithms, EAs have been widely used in crafting textual attacks \cite{AlzantotSEHSC18,ZangQYLZLS20,Maheshwary2020,Zhou2022}, and can generally achieve better performance than simple heuristics \cite{ZangQYLZLS20,Maheshwary2020}, due to their global search capabilities.

{\color{black}EAs are also the popular methods for solving MOPs \cite{ZitzlerTLFF03,knowles2006tutorial,LU2023101377,Li2023,yang2023reducing}, mainly due to two reasons: 
1) they  do not require particular assumptions like differentiability or continuity of the problem;
2) they can find multiple Pareto-optimal solutions in a single algorithm run.}
In textual attack, EAs have been used to solve the MOPs induced in crafting AEs \cite{Mathai2020}.
However, based on the results reported in \cite{Mathai2020}, the approach is still significantly inferior to existing single-objective optimization-based textual attack approaches.
{\color{black} It is worth mentioning that in the vision and speech domain, multi-objective EAs have also been used to generate AEs \cite{SuzukiTO19,Deng2019,KhareAM19,BaiaBP21,BaiaBFMP22}.}

{\color{black}
\subsection{Multi-objectivization}
Multi-objectivization \cite{MaHuangLiQiWangZhu2021} refers to reformulating a SOP into a MOP, which is then solved by a multi-objective optimization algorithm.
Compared to solving the original SOP, solving the reformulated MOP can be advantageous in reducing the number of local optima \cite{KnowlesWC01}, improving the population diversity \cite{lochtefeld2011multiobjectivization}, and leveraging more domain knowledge \cite{SeguraSGL11}.
As a result, the multi-objectivization can usually lead to a better solution to the original SOP.
In this work, we utilize multi-objectivization to involve the constraint on attack imperceptibility as an additional objective.
Compared to the SOP as defined in Problem~(\ref{eq:single_obj_constraint}), the reformulated MOP is advantageous as it naturally eliminates the need of manually setting the value of the threshold $\delta$.
Finally, for a comprehensive survey on multi-objectivization, interested readers may refer to \cite{MaHuangLiQiWangZhu2021}.}


\section{Methods}
\label{sec:prob_form}
Like most previous textual attacks, our approach is based on word substitution.
Formally, let $\mathbf{x}_{ori}=w_1...w_i...w_n$ denote the original input, where $n$ is the input length and $w_i$ is the $i$-th word.
We assume that for each $w_i$, a set of candidate substitutes $B_i=\{s_{1}, s_{2}, ...\}$ have been prepared (note $B_i$ can be empty).
Such candidate sets can be generated by any substitute nomination method such as synonym-based substitution \cite{RenDHC19,JinJZS20} and sememe-based substitution \cite{ZangQYLZLS20}.
{\color{black}
To craft an AE $\mathbf{x}$, for each $w_i$ in $\mathbf{x}_{ori}$, we can select at most one candidate substitute from $B_i$ to replace it (if none is selected, the word remains unchanged).
Hence, $\mathbf{x}$ is encoded as an integer vector of length $n$: $\mathbf{x}=[x_1x_2,...,x_n]$, where $x_i$ represents the index of the selected substitute word for $w_i$ (if none is selected, then $x_i=0$ for brevity).
For example, the AE in Figure~\ref{fig:substitute} is encoded as ``$[10020]$''.
}




\begin{figure}[tbp]
\centering
\scalebox{1.0}{
	\includegraphics[width=.5\columnwidth]{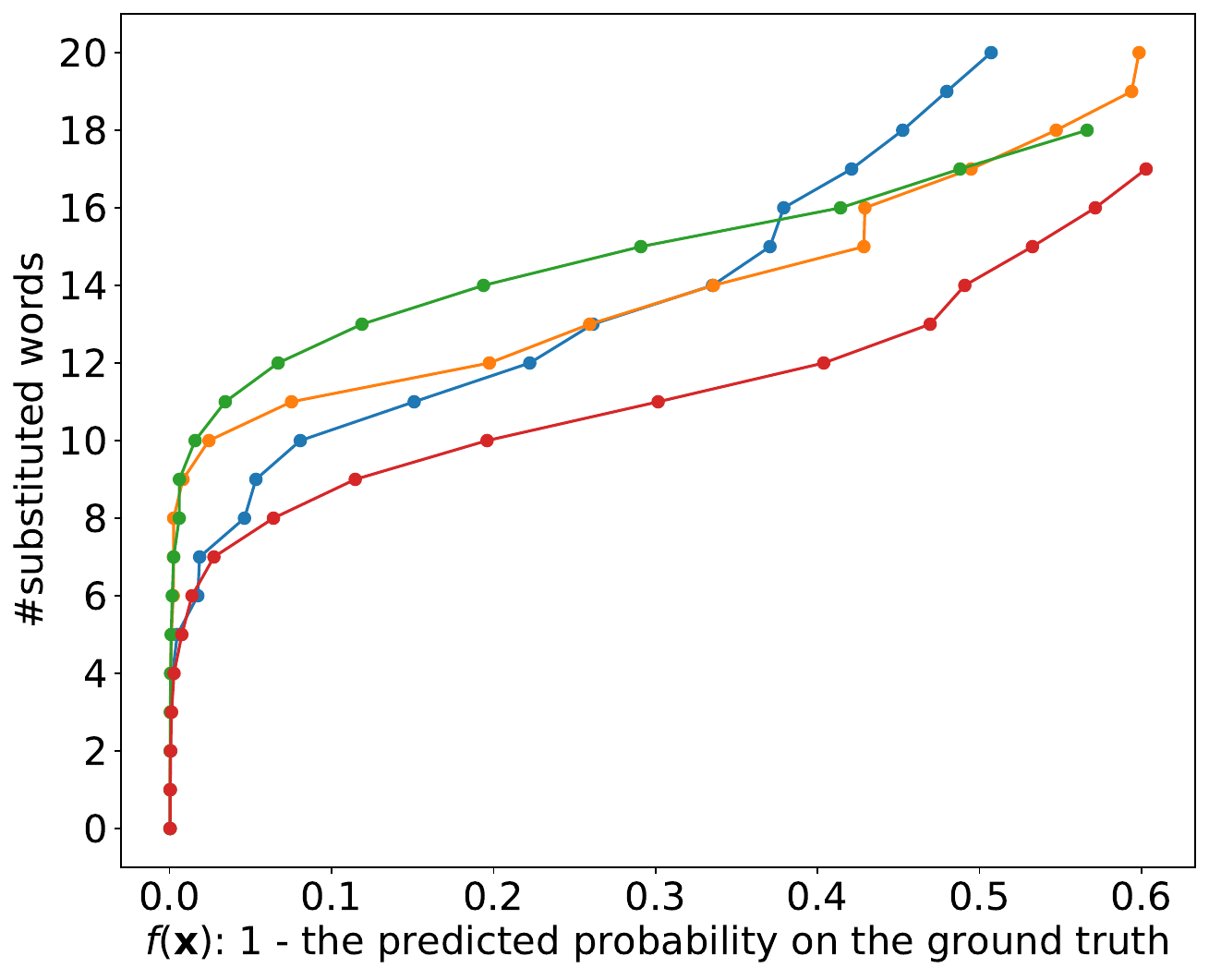}}
\caption{Visualization of $f(\mathbf{x})$ and the number of substituted words (\#substituted words), when applying the greedy algorithm \cite{RenDHC19} to maximize $f(\mathbf{x})$ on four examples  from the IMDB dataset \cite{MaasDPHNP11}, with the target model being BERT. Different examples are indicated by different lines (colors).}
\label{fig:f1_f2}
\end{figure}

{\color{black}
\subsection{Multi-objectivization of Adversarial Textual Attack}}
\label{sec:objective_untarget}
For the sake of brevity, henceforth we will use successful/unsuccessful AEs to denote those AEs that successfully/unsuccessfully fool the target model.
The goal considered by previous score-based attack approaches is to find $\mathbf{x}$ that maximizes an objective function $f(\mathbf{x})$, which is defined as one minus the predicted probability on the ground truth.\footnote{Actually, this definition corresponds to the so-called \textit{untargeted attack}. In the literature, there exists another definition of $f(\mathbf{x})$ corresponding to the so-called \textit{targeted attack}, which is defined as the predicted probability on a particular wrong class. For simplicity, the latter is omitted here, while our approach can also be applied to it (see Section~\ref{sec:targeted}).}
Maximizing only this objective is likely to result in an excessive number of words being substituted and a low semantic similarity between $\mathbf{x}$ and $\mathbf{x}_{ori}$, ultimately making the textual attacks easily perceptible~\cite{abs-2302-02568}.
Figure~\ref{fig:f1_f2} illustrates the changes in $f(\mathbf{x})$ and the number of substituted words,
when applying the greedy algorithm \cite{RenDHC19} to maximize $f(\mathbf{x})$ on four examples from the IMDB dataset \cite{MaasDPHNP11}.
Clearly, as the value of $f(\mathbf{x})$ gets larger, the number of substituted words is also rapidly increasing.

{\color{black}
Motivated by the above observation, we consider the following MOP:
\begin{equation}
\label{eq:problem_definition}
	\max_{\mathbf{x} \in \mathbf{X}} \left(f_1(\mathbf{x}),f_2(\mathbf{x})\right),
\end{equation}
where $f_1(\mathbf{x})$ concerns how well $\mathbf{x}$ fools the target model and $f_2(\mathbf{x})$ concerns attack imperceptibility.
Specifically, in both score-based and decision-based settings, given a solution $\mathbf{x}=[x_1x_2,...,x_n]$, $f_2(\mathbf{x})$ is defined as the 
opposite of the number of substituted words:
\begin{equation}
 	\label{eq:f2}
 	f_2(\mathbf{x})=-\sum_{i=1}^n \mathbb{1}_{x_i\neq0}(x_i),
\end{equation}
where $\mathbb{1}_{x_i\neq0}$ is an indicator function of whether $x_i\neq0$ (i.e., $w_i$ is substituted).

In contrast, the definitions of the first objective $f_1(\mathbf{x})$ are different in score-based setting and decision-based setting.
In score-based setting, it is defined as:
\begin{equation}
	\label{eq:score-based}
	\textbf{Score-based}:\ f_1(\mathbf{x})=\left\{\begin{array}{ll}
		1, & \mathbf{c} \neq \mathbf{c}_{adv}\\
		1-P(\mathbf{x},\mathbf{c}), & \text{otherwise}
	\end{array},\right.
\end{equation}
where $\mathbf{c}$ and $\mathbf{c}_{adv}$ are the ground truth label of $\mathbf{x}_{ori}$ and the label predicted by the target model on $\mathbf{x}$, respectively, and $P(\mathbf{x},\mathbf{c})$ is the probability predicted by the target model that $\mathbf{x}$ belongs to class $\mathbf{c}$.}
Note that the above definition differs from the one considered by previous works (i.e., one minus the predicted probability on the ground truth) in that the objective value is fixed at 1 when $\mathbf{x}$ achieves successful attack.
The intuition is that all successful AEs are equally good at fooling the target model.

In decision-based setting, $P(\mathbf{x},\mathbf{c})$ is unavailable.
One can define $f_1(\mathbf{x})$ as an indicator function, i.e., $\mathbb{1}_{\mathbf{c}_{adv} \neq \mathbf{c}}$.
However, based on this definition, unsuccessful AEs are completely indistinguishable since they all have the same value on $f_1$, making it difficult to identify those ``promising'' ones which are close to the target model's decision boundary.
Moreover, Figure~\ref{fig:f1_f2} clearly shows that the larger the number of substituted words, the closer $\mathbf{x}$ is to successfully fooling the target model.
Therefore, we can actually use the number of substituted words to distinguish those unsuccessful AEs from each other.
{\color{black}
Concretely, in decision-based setting, $f_1(\mathbf{x})$ is defined as follows:
\begin{equation}
\label{eq:decision-based}
  \textbf{Decision-based}:\ f_1(\mathbf{x})=\left\{\begin{array}{ll}
    +\infty, & \mathbf{c} \neq \mathbf{c}_{adv}\\
    \sum_{i=1}^n \mathbb{1}_{x_i\neq0}(x_i), & \text{otherwise}
    \end{array}.\right.
\end{equation}

Problem~(\ref{eq:problem_definition}) is at least NP-hard because even the single-objective version of it is already NP-hard~\cite{LiuLCT2021}.
Solving this MOP needs to compare solutions, which is based on the well-known dominance relation in multi-objective optimization \cite{kaisa1998,Deb2001}, as detailed below.
\begin{definition}[Domination]
	\label{def:domi}
	Given two solutions $\mathbf{x}_1$ and $\mathbf{x}_2$ to Problem~(\ref{eq:problem_definition}), $\mathbf{x}_1$ is said to dominate $\mathbf{x}_2$, denoted as $\mathbf{x}_1 \prec \mathbf{x}_2$, iff for $\forall i \in \{1,2\}, f_i(\mathbf{x}_1) \geq f_i(\mathbf{x}_2)$ and $\exists j \in \{1,2\}, f_j(\mathbf{x}_1) > f_j(\mathbf{x}_2)$.
\end{definition}
Intuitively, the above definition formalizes the notion of ``better'' solutions to Problem~(\ref{eq:problem_definition}).
Based on it, a multi-objective EA would identify a non-dominated solution set, as defined below.

\begin{definition}[Non-dominated Solution Set]
	\label{def:non_domi_set}
	A set $P$ of solutions is a non-dominated solution set iff for any $\mathbf{x} \in P$, $\nexists \mathbf{x}_1 \in P \setminus \{\mathbf{x}\}$ such that $\mathbf{x}_1 \prec \mathbf{x}$.
\end{definition}


One may observe that in decision-based setting, $f_1$ concerns maximizing $\sum_{i=1}^n \mathbb{1}_{x_i\neq0}(x_i)$ (see Eq.~(\ref{eq:decision-based})) while $f_2$  concerns minimizing it (see Eq.~(\ref{eq:f2})).
Given two unsuccessful solutions $\mathbf{x}_1$ and $\mathbf{x}_2$, if they have the same number of substitute words, then we have $f_1(\mathbf{x}_1)=f_1(\mathbf{x}_2) \land f_2(\mathbf{x}_1)=f_2(\mathbf{x}_2)$.
On the other hand, if $\mathbf{x}_1$ has more substituted words than $\mathbf{x}_2$, then we have  $f_1(\mathbf{x}_1)>f_1(\mathbf{x}_2) \land f_2(\mathbf{x}_1)<f_2(\mathbf{x}_2)$.
Hence, in either case, $\mathbf{x}_1$ and $\mathbf{x}_2$ are not comparable.
That is, the objective functions do not exhibit preference between them, which would consequently force the algorithm to keep trying random word substitutions to find successful AEs (see Section~\ref{sec:method}).
Actually, such strategy has been shown effective in finding successful AEs in decision-based setting \cite{Maheshwary2020}.
}

{\color{black}
\subsection{The HydraText Algorithm}
\label{sec:method}

We propose HydraText, a simple yet effective EA, to identify a non-dominated solution set to Problem~(\ref{eq:problem_definition}).
HydraText is mostly inspired by the conventional Global Simple Evolutionary Multiobjective Optimizer (GSEMO) \cite{LaiZHZ14,Qian20}.
Specifically, GSEMO is a simple EA that maintains a population (non-dominated solution set) and in each generation randomly selects a solution from this set and mutates it to generate new solutions. 
Compared to GSEMO, HydraText mainly differs in solution mutations and termination conditions.
Here, it is worth noting that the main goal of this work is to investigate the use of multi-objectivization to adversarial textual attack.
Therefore, a relatively simple multi-objective optimization algorithm (HydraText) is employed.
In the future, we will investigate using more advanced multi-objective EAs, such as NSGA-II~\cite{DebAPM02} and MOEA-D~\cite{ZhangL07}, to further enhance the attack performance.

\begin{algorithm}[tbp]
	\color{black}
	\LinesNumbered
	\KwIn{objective functions $f_1,f_2$; length of the original input $n$}
	\KwOut{$P$, $\mathbf{x}^*$}
	$g \leftarrow 0$, $\mathbf{x}^* \leftarrow \{0\}^n$, $P \leftarrow \{\mathbf{x}^*\}$;\\
	\While{$g<G$}
	{
		Select $\mathbf{x}$ from $P$ uniformly at random;\\
		$\mathbf{x}_1, \mathbf{x}_2, \mathbf{x}_3 \leftarrow $ apply three mutation operators to $\mathbf{x}$;\\
		\tcc{update $P$}
		\For{$\mathbf{x} \in \{\mathbf{x}_1,\mathbf{x}_2,\mathbf{x}_3\}$}
		{
			\If{$\nexists \mathbf{x}' \in P$ {\normalfont such that} $\mathbf{x}'$ {\normalfont is} better {\normalfont than} $\mathbf{x} $}
			{
				$P \leftarrow P \setminus \{\mathbf{x}' | \mathbf{x}' \in P \land (\mathbf{x} \prec \mathbf{x}' \lor (f_1(\mathbf{x})=f_1(\mathbf{x}') \land f_2(\mathbf{x})=f_2(\mathbf{x}')))\}$;\\
				$P \leftarrow P \cup \{\mathbf{x}\}$;\\
			}
		}
		$\mathbf{x}^* \leftarrow \argmax_{\mathbf{x} \in P}f_1(\mathbf{x})$;\\
		\lIf{\normalfont every neighborhood solution of $\mathbf{x}^*$ has been visited}{\textbf{break}}
		$g \leftarrow g+1$;\\
	}
	\Return{$P, \mathbf{x}^*$}
	\caption{HydraText}
	\label{alg:hydratext}
\end{algorithm}
%

As presented in Algorithm~\ref{alg:hydratext}, HydraText starts from the initial solution (an all-zero vector of length $n$) representing the original input $\mathbf{x}_{ori}$ (line 1).
In each generation, HydraText uses mutation operators and dominance-based comparison to improve the solutions in the population $P$ (lines 2-15).
Specifically, a parent solution $\mathbf{x}$ is first randomly selected from $P$ (line 3);
then three offspring solutions $\mathbf{x}_1$, $\mathbf{x}_2$, and $\mathbf{x}_3$ (lines 4) are generated by applying three mutation operators to $\mathbf{x}$;
finally each offspring solution is subject to the following procedure (lines 6-11): if no solution in $P$ is \textit{better} than it (line 7), then it is included into $P$ (line 9), and those solutions in $P$ that are dominated by it or have the same objective value as it on both objective functions are removed (line 8).

\subsubsection{Mutation Operators}
To enable the algorithm to explore a larger solution space, we use three different mutation operators: insertion, deletion, and exchange.
Recall a solution $\mathbf{x}$ is encoded as an integer vector of length $n$, and a zero indicates the word not being substituted.
The mutation operators are as follows.
\begin{enumerate}
	\item \textbf{insertion:} change a randomly selected zero in the vector to a random nonzero integer.
	\item \textbf{deletion:} change a randomly selected nonzero integer in the vector to zero.
	\item \textbf{exchange:} change a randomly selected nonzero integer in the vector to a random nonzero integer.
\end{enumerate}
Figure~\ref{fig:substitute} illustrates the solution encoding and mutation operators.

\subsubsection{Solution Comparison}
HydraText compares solutions mainly based on dominance relation.
Specifically, in line 7 of Algorithm~\ref{alg:hydratext}, a solution $\mathbf{x}'$ is \textit{better} than $\mathbf{x}$ if $\mathbf{x}' \prec \mathbf{x}$.
Additionally, in the case that $f_1(\mathbf{x}_1)=f_1(\mathbf{x}_2) \land f_2(\mathbf{x}_1)=f_2(\mathbf{x}_2)$, to encourage the algorithm to find AEs that are semantically similar to $\mathbf{x}_{ori}$, we adopt the following metric to determine whether $\mathbf{x}'$ is better than $\mathbf{x}$:
\begin{equation}
	\label{eq:f3}
	\text{cos\_sim}(\text{USE}(\mathbf{x}_{ori}),\text{USE}(\mathbf{x}')) > \text{cos\_sim}(\text{USE}(\mathbf{x}_{ori}),\text{USE}(\mathbf{x})).
\end{equation}
Here, USE refers to the embeddings (fixed-size vectors) encoded by the universal sentence encoder \cite{Cer2018}, and \text{cos\_sim} refers to cosine similarity.

\subsubsection{Analysis of the Population $P$ and Determining the Final Solution}

During the algorithm run, the population $P$ is always ensured to be a non-dominated solution set.
Moreover, we have the following useful results that naturally determine the final solution.
\begin{lemma}
	\label{lem:onesol}
	For each possible value $q$ of the objective $f_2$, i.e., $q \in \{0,-1,...,-n\}$, $P$ contains at most one solution $\mathbf{x}$ such that $f_2(\mathbf{x})=q$.
\end{lemma}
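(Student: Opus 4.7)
The plan is to prove the lemma by contradiction, using only the definitions of weak dominance (Definition~\ref{def:weak_domi}) and non-weakly-dominated solution set (Definition~\ref{def:non_weakly_domi_set}). The key observation is that weak dominance is determined solely by $f_1$ and $f_2$, so whenever two solutions agree on $f_2$, their comparison on $f_1$ forces one to weakly dominate the other.

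Concretely, I would begin by assuming, for the sake of contradiction, that for some $q \in \{0,-1,\dots,-n\}$ there exist two distinct solutions $S_1, S_2 \in P$ with $f_2(S_1) = f_2(S_2) = q$. Without loss of generality, assume $f_1(S_1) \geq f_1(S_2)$ (otherwise swap the roles of $S_1$ and $S_2$). Then both conditions in Definition~\ref{def:weak_domi} are satisfied: $f_1(S_1) \geq f_1(S_2)$ holds by assumption, and $f_2(S_1) \geq f_2(S_2)$ holds since both equal $q$. Hence $S_1 \preceq S_2$, which directly contradicts Definition~\ref{def:non_weakly_domi_set}, since $S_2 \in P \setminus \{S_1\}$ would have a weak dominator in $P$. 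Therefore no such pair can exist, and $P$ contains at most one solution attaining each value $q$.

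There is essentially no technical obstacle: the result is an immediate consequence of the fact that the lexicographic ordering used in Section~\ref{sec:domin} places $f_3$ strictly below $f_1$ and $f_2$ in the weak-dominance relation, so $f_3$ plays no role here. The only thing to be careful about is the WLOG step, i.e., making sure the argument is symmetric in $S_1$ and $S_2$, which it is because the weak dominance test on $f_1$ only needs a non-strict inequality in one direction. The proof should occupy just a few lines.
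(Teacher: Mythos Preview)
Your proposal is correct and follows essentially the same approach as the paper: assume two distinct solutions in $P$ share the same $f_2$ value, observe that one must weakly dominate the other (since weak dominance only involves $f_1$ and $f_2$), and derive a contradiction with $P$ being non-weakly-dominated. The paper phrases the case split as ``$S_1 \preceq S_2$ or $S_2 \preceq S_1$'' rather than your WLOG reduction, but the argument is the same.
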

\begin{proof}
Recalling that $n$ is the length of the original input, assume $\exists \mathbf{x}_1, \mathbf{x}_2 \in P$ and $\exists q \in \{0,-1,...,-n\}$ such that $f_2(\mathbf{x}_1)=f_2(\mathbf{x}_2)=q$.
It must hold that $f_1(\mathbf{x}_1) \neq f_1(\mathbf{x}_2)$ due to the line 9 of Algorithm~\ref{alg:hydratext}.
Without loss of generality, we assume $f_1(\mathbf{x}_1) > f_1(\mathbf{x}_2)$, then by Definition~\ref{def:domi}, $\mathbf{x}_1$ dominates $\mathbf{x}_2$, which contradicts with the fact that $P$ is a non-dominated solution set.
\end{proof}
Lemma~\ref{lem:onesol} immediately leads to the following corollary on the maximum size of $P$.
\begin{corollary}
	\label{cor:setsize}
	$P$ contains at most $n+1$ solutions.
\end{corollary}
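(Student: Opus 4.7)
The plan is to derive the bound on $|P|$ directly from Lemma~\ref{lem:onesol} by a simple counting argument on the range of the second objective. First I would observe that, by the definition of $f_2$ in Eq.~(\ref{eq:f2}), the objective $f_2(S) = -|S|$ takes values in the finite set $\{0, -1, \ldots, -n\}$, since $|S|$ ranges over $\{0, 1, \ldots, n\}$. This set has exactly $n+1$ elements, which is precisely the quantity appearing in the corollary.

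Next I would invoke Lemma~\ref{lem:onesol}, which guarantees that for each admissible value $q \in \{0, -1, \ldots, -n\}$, the non-weakly-dominated solution set $P$ contains at most one solution $S$ with $f_2(S) = q$. Partitioning $P$ according to the value of $f_2$ then yields at most $n+1$ nonempty equivalence classes, each of cardinality at most one, so $|P| \leq n+1$.

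There is essentially no obstacle here: the entire argument is a pigeonhole/counting step on top of the already-proven Lemma~\ref{lem:onesol}, and no new use of the dominance definitions or of $f_1, f_3$ is required. The only minor care point is to make explicit that the range of $f_2$ is exactly $\{0, -1, \ldots, -n\}$ (and not something larger), which follows immediately from the constraint $|S \cap B_i| \leq 1$ for all $i \in [n]$, forcing $|S| \leq n$.
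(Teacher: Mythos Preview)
Your proposal is correct and follows exactly the paper's intended route: the paper states that Lemma~\ref{lem:onesol} ``immediately leads to'' the corollary, and your counting argument on the $n+1$ possible values of $f_2$ is precisely that immediate consequence, just made explicit.
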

Actually, if we sort the solutions in $P$ in descending order according to $f_2$, then the sorted solution list is exactly in ascending order according to $f_1$, as stated in the following lemma.
\begin{lemma}
	\label{lem:sorted}
	Sort the solutions in $P$ as $\mathbf{x}_{\pi(1)},\mathbf{x}_{\pi(2)},...,\mathbf{x}_{\pi(|P|)}$ such that $f_2(\mathbf{x}_{\pi(1)}) > f_2(\mathbf{x}_{\pi(2)})...>f_2(\mathbf{x}_{\pi(|P|)})$, then it must hold that $f_1(\mathbf{x}_{\pi(1)}) <f_1(\mathbf{x}_{\pi(2)})< ...<f_1(\mathbf{x}_{\pi(|P|)})$.
\end{lemma}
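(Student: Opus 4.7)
The plan is to prove this by direct contradiction using the defining property of a non-weakly-dominated set. The intuition is simple: if two solutions in $P$ have strictly different $f_2$ values, the weak-domination relation on $(f_1,f_2)$ forces the one with larger $f_2$ to have strictly smaller $f_1$; otherwise, it would weakly dominate the other and hence could not coexist in $P$.

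More concretely, I would proceed as follows. First, I would fix any two indices $i < j$ in the sorted order, so that by construction $f_2(S_{\pi(i)}) > f_2(S_{\pi(j)})$ (strict, because the sorting prescribed in the lemma uses strict inequalities, which is internally consistent thanks to Lemma~\ref{lem:onesol} guaranteeing distinct $f_2$ values across $P$). Second, I would assume for contradiction that $f_1(S_{\pi(i)}) \geq f_1(S_{\pi(j)})$. Combining this with $f_2(S_{\pi(i)}) \geq f_2(S_{\pi(j)})$ gives exactly the condition of Definition~\ref{def:weak_domi}, so $S_{\pi(i)} \preceq S_{\pi(j)}$. Since $S_{\pi(i)} \ne S_{\pi(j)}$ (their $f_2$ values differ), this contradicts Definition~\ref{def:non_weakly_domi_set}. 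Hence $f_1(S_{\pi(i)}) < f_1(S_{\pi(j)})$, and specializing this to consecutive indices $i = k$, $j = k+1$ yields the claimed chain $f_1(S_{\pi(1)}) < f_1(S_{\pi(2)}) < \cdots < f_1(S_{\pi(|P|)})$.

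There is no real obstacle in this argument; the proof is essentially a one-line application of the weak-domination definition. The only subtlety worth highlighting explicitly is that the strict inequalities on $f_2$ in the hypothesis are legitimate, which follows from Lemma~\ref{lem:onesol} (no two solutions in $P$ share a common $f_2$ value), and that the indexing $|S|$ in the statement should be understood as $|P|$, the cardinality of the non-weakly-dominated set being sorted.
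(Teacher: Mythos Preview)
Your proposal is correct and follows essentially the same argument as the paper: assume $i<j$ with $f_1(S_{\pi(i)})\geq f_1(S_{\pi(j)})$, observe that together with $f_2(S_{\pi(i)})>f_2(S_{\pi(j)})$ this gives $S_{\pi(i)}\preceq S_{\pi(j)}$ by Definition~\ref{def:weak_domi}, and derive a contradiction with $P$ being non-weakly-dominated. Your additional remarks (invoking Lemma~\ref{lem:onesol} to justify the strict $f_2$ ordering and flagging the $|S|$/$|P|$ typo) are helpful clarifications but do not change the structure of the proof.
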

\begin{proof}
	If $\exists \mathbf{x}_{\pi(i)}, \mathbf{x}_{\pi(j)} \in P $ such that $i < j$ and $f_1(\mathbf{x}_{\pi(i)}) \geq f_1(\mathbf{x}_{\pi(j)})$, then by Definition~\ref{def:domi}, $\mathbf{x}_{\pi(j)}$ must be dominated by $\mathbf{x}_{\pi(i)}$ since $f_2(\mathbf{x}_{\pi(i)}) > f_2(\mathbf{x}_{\pi(j)})$, which contradicts with the fact that $P$ is a non-dominated solution set.
\end{proof}
In particular, the solution $\mathbf{x}_{\pi(|P|)}$ is important because it achieves the highest value on $f_1$ among all the solutions in $P$.
In other words, it can fool the target model to the most extent among all the solutions.
For convenience, we denote this solutions as $\mathbf{x}^*$ (line 12 in Algorithm~\ref{alg:hydratext}).
The following theorem indicates that  $\mathbf{x}^*$ is the \textit{only} solution in $P$ that may successfully fool the target model.

\begin{theorem}
	\label{the:returned_sol}
	$P$ contains at most one solution that can successfully fool the target model.
	If $P$ indeed contains one, then the corresponding solution is $\mathbf{x}^*$.
\end{theorem}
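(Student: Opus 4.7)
The plan is to leverage Lemma~\ref{lem:sorted}, which tells us that the $f_1$ values of the solutions in $P$ are pairwise distinct (they are strictly increasing when the solutions are sorted by decreasing $f_2$). The key supporting observation is that, from the definitions in Eq.~(\ref{eq:score-based}) and Eq.~(\ref{eq:decision-based}), every successful AE attains one single distinguished $f_1$ value in each setting ($1$ in score-based, $+\infty$ in decision-based), and this value is strictly larger than the $f_1$ value of any unsuccessful AE: in score-based we have $1-P(\mathbf{x}_{adv},\mathbf{c})<1$ since $P(\mathbf{x}_{adv},\mathbf{c})>0$, and in decision-based we have $|S|<+\infty$.

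Given this, I would first prove the ``at most one'' part by contradiction: assume $S_a,S_b\in P$ both successfully fool the target model with $S_a\neq S_b$. Then $f_1(S_a)=f_1(S_b)$ (both equal the distinguished successful value), which contradicts the strict inequality chain provided by Lemma~\ref{lem:sorted}. Hence at most one solution in $P$ is successful.

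Next I would prove that if some $S'\in P$ successfully fools the target model, then $S'=S^{*}$. By the observation above, $f_1(S')$ is the maximum possible $f_1$ value any solution can attain, so $f_1(S')\geq f_1(S)$ for every $S\in P$. Since $S^{*}$ is by construction the solution in $P$ with the largest $f_1$ value, we get $f_1(S')=f_1(S^{*})$; applying Lemma~\ref{lem:sorted} once more (distinctness of $f_1$ values in $P$) forces $S'=S^{*}$.

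I do not expect any substantive obstacle: the statement is essentially a direct consequence of Lemma~\ref{lem:sorted} combined with a simple case analysis on the two definitions of $f_1$. The only mild subtlety is treating the score-based and decision-based settings in a unified way, which is handled by pointing out that in both cases the ``successful'' value of $f_1$ is a unique maximal value strictly exceeding every value attainable by unsuccessful AEs.
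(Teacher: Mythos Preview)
Your proposal is correct and follows essentially the same approach as the paper's own proof: both use that all successful solutions share a common $f_1$ value, invoke Lemma~\ref{lem:sorted} to conclude that $f_1$ values in $P$ are pairwise distinct (hence at most one successful solution), and then use that this common value is maximal to identify the successful solution with $S^*$. Your version just spells out a bit more explicitly why the successful value strictly exceeds every unsuccessful value in each setting.
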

\begin{proof}
	By definition of $f_1$ (see Eqs.~(\ref{eq:score-based})-(\ref{eq:decision-based})), successful solutions achieve the same value on $f_1$.
	However, by Lemma~\ref{lem:sorted}, it is impossible that two solutions in $P$ have the same value on $f_1$.
	Hence, $P$ contains at most one successful solution.
	Considering that successful solutions achieve the highest value on $f_1$, if $P$ indeed contains one successful solution, then it must be $\mathbf{x}^*$ because $\mathbf{x}^*$ has the highest value on $f_1$ among all the solutions in $P$.
\end{proof}

It is conceivable that if $P$ contains no successful solution, then returning any of the solutions in $P$ makes no difference.
On the other hand, if $\mathbf{x}^*$ can successfully fool the target model, then it should be returned.
In summary, $\mathbf{x}^*$ could always be returned as the final solution (line 16 in Algorithm~\ref{alg:hydratext}).

The above discussions also imply that the solutions in $P \setminus \{\mathbf{x}^*\}$ actually serve as a rich source for the generation of potentially successful solutions with higher $f_2$ values than $\mathbf{x}^*$.
The reason is as follows.
In the parent selection step (line 3 in Algorithm~\ref{alg:hydratext}), supposing a solution $\mathbf{x} \in P \setminus \{\mathbf{x}^*\}$ is selected to generate offspring solutions $\mathbf{x}_1,\mathbf{x}_2,\mathbf{x}_3$ via the insertion, deletion, and exchange operators, respectively, it then holds that $f_2(\mathbf{x}_1)=f_2(\mathbf{x})-1$, $f_2(\mathbf{x}_2)=f_2(\mathbf{x})+1$, and $f_2(\mathbf{x}_3)=f_2(\mathbf{x})$.
That is, among them only $\mathbf{x}_1$ has lower $f_2$ value than $\mathbf{x}$, and $f_2(\mathbf{x}_1)$ is still not lower than (in most cases, higher than) $f_2(\mathbf{x}^*)$ because $f_2(\mathbf{x}_1)=f_2(\mathbf{x})-1 \geq f_2(\mathbf{x}^*)$.
Considering the probability that the solutions in $P \setminus \{\mathbf{x}^*\}$ are selected as the parent solution is rather high, i.e., $1-1/|P|$, it can be inferred that a vast majority of the offspring solutions generated during the whole algorithm run have higher $f_2$ values than $\mathbf{x}^*$.
Once any of these offspring solutions achieves successful attack, it will dominate $\mathbf{x}^*$ and replace it, indicating HydraText has found a new successful AE with lower number of substituted words than the previous one.
\subsubsection{Termination Conditions and Computational Costs}
Finally, HydraText has two termination conditions.
The first is the maximum number of iterations $G$ (line 2 in Algorithm~\ref{alg:hydratext}), which aims to restrict the overall computational costs of the algorithm.
The second is that all the neighborhood solutions of $\mathbf{x}^*$, i.e., the solutions that can be generated by applying the three types of variations to $\mathbf{x}^*$, have been visited (line 13 in Algorithm~\ref{alg:hydratext}).
When terminated based on the second condition, the solution $\mathbf{x}^*$ is ensured to be a local optimum on $f_1$.
This is important since the maximization of $f_1$ determines the ability of HydraText to craft successful AEs, which is essential for practical adversarial textual attackers.

For adversarial textual attack, model queries, i.e., fitness evaluations (FEs), are expensive and account for the vast majority (>95\%) of the computational costs of the algorithm.
In each generation of HydraText, the algorithm would perform three model queries to evaluate the three offspring solutions, meaning the total query number is strictly upper bounded by $3T$.
In practice, the number of queries performed by HydraText is often much smaller than $3T$ due to the second termination condition (see the experiment results in Section~\ref{sec:exp}).}

\begin{table}[tbp]
	\centering
	\caption{Statistics of the datasets and the test accuracy of the target models.
		``\#Classes'' refers to the number of classes.
		``Avg. \#W'' refers to the average sentence length (number of words).
		``Train'' and ``Test'' denote the number of instances of the training and testing sets, respectively.
		``Model ACC\textbar\#Insts.'' refers to the model's test accuracy and the number of testing instances used for attacking the model, separated by ``\textbar''.}
	\scalebox{0.706}{
		\begin{tabular}{cccccc|ccc}
			\toprule
			Task  & Dataset & Train & Test  & \#Classes & Avg. \#W & BERT ACC\textbar\#Insts.  & WordLSTM ACC\textbar\#Insts. & WordCNN ACC\textbar\#Insts.\\
			\midrule
			\multirow{3}[2]{*}[2pt]{\makecell[c]{Text\\ Classification}} & IMDB  & 25K    & 25K    & 2     & 215   & 90.9\ \textbar\ 2132 & 89.8\ \textbar\ 2067 & 89.7\ \textbar\ 2046 \\
			& MR    & 9K   & 1K   & 2     & 20    & 85.0\ \textbar\ 850 & 80.7\ \textbar\ 766\ & 78.0\ \textbar\ 786\ \\
			& AG News & 120K   & 7.6K  & 4     & 43    & 94.3\ \textbar\ 5000 & 91.3\ \textbar\ 5000 & 91.5\ \textbar\ 5000 \\
			\midrule
			\midrule
			Task  & Dataset & Train & Test  & \#Classes & Avg \#W & BERT  ACC\textbar\#Insts.& Infersent ACC\textbar\#Insts.& ESIM ACC\textbar\#Insts.\\
			\midrule
			\multirow{2}[2]{*}[2pt]{\makecell[c]{Textual\\ Entailment}} & SNLI  & 560K  & 10K   & 3     & 8     & 90.5\ \textbar\ 2433 & 83.4\ \textbar\ 2122 & 86.1\ \textbar\ 2308 \\
			& MNLI  & 433K  & 10K   & 3     & 11    & 84.1\ \textbar\ 5000 & 69.6\ \textbar\ 4024 & 76.6\ \textbar\ 4703 \\
			\bottomrule
	\end{tabular}}
	\label{tab:datas_models}
\end{table}

\section{Experiments}
\label{sec:exp}
The experiments mainly aim to address the following question: whether HydraText could craft more imperceptible textual attacks without sacrificing attack success rates, compared to previous textual attack approaches.
To answer it, we performed a large-scale evaluation of HydraText and five recently proposed textual attack approaches by using them to attack five modern NLP models across five datasets, in both score-based and decision-based attack settings.
Eight different evaluation metrics were adopted to thoroughly assess the attacking ability and attack imperceptibility of these approaches. 
All the codes, datasets, and target models are available at \url{https://anonymous.4open.science/r/HydraText}.


\subsection{Datasets and Target Models}
We considered the following five benchmark datasets.
\begin{enumerate}
	\item AG News \cite{ZhangZL15}: A sentence-level multiclass news classification dataset; each instance is labeled with one of the four topics: World, Sports, Business, and Science/Technology.
	\item IMDB \cite{MaasDPHNP11}: A document-level sentiment classification dataset of movie reviews; each instance is labeled with positive or negative.
	\item MR \cite{PangL05}: A sentence-level sentiment classification dataset of movie reviews; each instance is labeled with positive or negative.
	\item SNLI \cite{BowmanAPM15}: A data set for natural language inference (NLI); each instance consists of a premise-hypothesis sentence pair and the task is to judge whether the second sentence can be derived from entailment, contradiction, or neutral relationship with the first sentence.
	\item MultiNLI \cite{WilliamsNB18}: A multi-genre NLI dataset with coverage of transcribed speech, popular fiction, and government reports; compared to SNLI, it contains more linguistic complexity with various written and spoken English text.
\end{enumerate}

{
\color{black}
Among the five datasets, the first three belong to the task of text classification (including news categorization and sentiment analysis), while the last two belong to the task of textual entailment.
Previous studies~\cite{ZangQYLZLS20,RenDHC19} often selected a limited number (typically 1000) of correctly classified testing instances as the original input for attacking.
However, this can introduce a selection bias for evaluating the attack performance.
To address this issue, we used all the correctly classified testing instances for attacking, which usually led to a significantly larger number of used testing instances than previous studies (as shown in the last column of Table~\ref{tab:datas_models}).

For each dataset, we trained three NLP models using the training instances and then used them as target models in the experiments.
Specifically, for text classification task, the models were word-based convolutional neural network (WordCNN) \cite{Kim14}, word-based long-short term memory (WordLSTM) \cite{hochreiter1997long}, and bidirectional encoder representations from transformers (BERT) \cite{DevlinCLT19};
for textual entailment task, the models were ESIM \cite{ChenZLWJI17}, Infersent \cite{ConneauKSBB17}, and fine-tuned BERT \cite{DevlinCLT19}.
These models employed the three mainstream architectures in NLP: LSTM, Transformer, and CNN, and all of these models could achieve the state-of-the-art testing accuracy.
Thus, the inclusion of these diverse models enabled a comprehensive assessment of the attack approaches against the most popular NLP models.
Table~\ref{tab:datas_models} summarizes all the datasets and target models.
From the perspective of optimization, each unique pair of data set and target model corresponded to a specific sub-class of instances of Problem~(\ref{eq:problem_definition}).
Therefore, the attack approaches would be tested on 15 different sub-classes of problem instances, containing \textit{44237} instances in total, which was expected to be sufficient for assessing their performance~\cite{LiuTL020}.
}

\subsection{Baselines and Algorithm Settings}
We compared HydraText with five recently proposed open-source textual attack approaches.
Specifically, in score-based setting, we considered four approaches, i.e., PSO \cite{ZangQYLZLS20}, GA \cite{AlzantotSEHSC18}, TextFooler \cite{JinJZS20}, and PWWS \cite{RenDHC19}, as baselines.
In decision-based setting where there exists significantly fewer available attack approaches, we chose GADe \cite{Maheshwary2020} as the baseline.
\begin{enumerate}
	\item PSO \cite{ZangQYLZLS20}: A score-based attack approach that uses sememe-based substitution and PSO to search for AEs.
	\item GA \cite{AlzantotSEHSC18}: A score-based attack approach that uses synonym-based substitution and a GA to search for AEs.
	\item TextFooler \cite{JinJZS20}: A score-based attack approach that uses synonym-based substitution and importance-based greedy algorithm to craft AEs.
	\item PWWS \cite{RenDHC19}: A score-based attack approach that uses synonym-based substitution and saliency-based greedy algorithm to craft AEs.
	\item GADe \cite{Maheshwary2020}: A decision-based attack approach that uses synonym-based substitution and a GA to search for AEs.
\end{enumerate}
These approaches have been shown to achieve the state-of-the-art attack performance for various NLP tasks and target models.
More importantly, the optimization algorithms adopted by them represent the two mainstream choices in the literature: population-based algorithms and simple heuristics.
{\color{black}
For all the baselines except PWWS which is parameter-free, the recommended parameter settings from their original publications were used.
Specifically, for PSO \cite{ZangQYLZLS20}, $V_{max}$, $\omega_{max}$, $\omega_{min}$, $P_{max}$, $P_{min}$, and $k$ were set to 1, 0.8, 0.2, 0.8, 0.2 and 2, respectively.
For GA \cite{AlzantotSEHSC18}, $N$, $K$, and $\delta$ were set to 8, 4 and 0.5, respectively.
For TextFooler \cite{JinJZS20}, $N$ and $\delta$  were set to 50 and 0.7, respectively.
For GADe \cite{Maheshwary2020}, $\mathcal{K}$ and $\lambda$ were set to 30 and 25, respectively.

To make a fair comparison, a relatively large budget of model queries (FEs), i.e., 6000, was set to allow all the compared approaches to run to completion to achieve their best possible attack performance.
This means the only parameter of HydraText, i.e., the maximum generation number $G$, was set to 2000.
Note that in the experiments the compared approaches often consumed much fewer model queries than 6000, due to their own termination conditions.}
Since HydraText could be used in combination with any existing substitute nomination method, following PSO and GADe, HydraText used sememe-based substitution in score-based setting and synonym-based substitution in decision-based setting.

\subsection{Evaluation Setup}
Eight different metrics were adopted to assess the performance of attack approaches in terms of attacking ability and attack imperceptibility.
Table~\ref{tab:eval_metrics} summarizes all the metrics.

Attack success rate is the percentage of successful attacks, which is an essential metric for assessing the attacking ability.
Query number is the number of queries consumed by the approaches for attacking an instance.
For attack imperceptibility, we used modification rate and semantic similarity as the basic metrics for automatic evaluation.
The former is the percentage of words in the crafted AEs that differ from the original input, and the latter is the cosine similarity between the embeddings (encoded by USE \cite{Cer2018}) of the original input and the AEs.
Considering that HydraText directly optimizes these two metrics while the baselines do not, we used four advanced metrics, including grammaticality, fluency, naturality, and validity, to further assess how likely the AEs crafted by these approaches are to be perceived by detectors and humans.

\begin{table}[tbp]
	\centering
	\caption{Details of the evaluation metrics. ``Auto'' and ``Human'' refer to automatic and human evaluation, respectively. ``Higher'' and ``Lower'' indicate the
		higher/lower the metric, the better an approach performs.}
	\scalebox{0.8}{
		\begin{tabular}{ccc}
			\toprule
			Metrics  & How to Evaluate? & Better? \\
			\midrule
			Success Rate & Auto & Higher\\
			Query Number & Auto & Lower\\
			Modification Rate & Auto & Lower\\
			Semantic Similarity & Auto & Higher\\
			Grammaticality & Auto (Error Increase Rate) & Lower\\
			Fluency & Auto (Language Model Perplexity) & Lower\\
			Naturality & Human (Naturality Score) & Higher\\
			Validity & Human (Percentage of Valid AEs) & Higher\\
			\bottomrule
	\end{tabular}}
	\label{tab:eval_metrics}
\end{table}

Concretely, grammaticality is measured by the increase rate of grammatical error numbers of AEs compared to the original input, where the number of grammatical errors in a sentence is obtained by LanguageTool.\footnote{https://languagetool.org/}
Fluency is measured by the language model perplexity, where the model is GPT-2 \cite{radford2019language}.
Finally, naturality and validity are evaluated by humans.
The former refers to the naturality score annotated by humans to assess how indistinguishable the AEs are from human-written text, and the latter refers to the percentage of AEs with human prediction consistency (see Section~\ref{sec:human_eval} for the details of human evaluation).

Following previous studies \cite{AlzantotSEHSC18,ZangQYLZLS20,RenDHC19}, we restricted the length of the original input to 10-100 to prevent the approaches from running prohibitively long, and treated the crafted AEs with modification rates higher than 25\% as failed attacks.
Previous studies usually sampled a number (typically 1000) of correctly classified instances from the testing sets as the original input for attacking, which however may introduce \textit{selection bias} for assessing the attack performance.
To avoid this issue, we used \textit{all} the correctly classified testing instances for attacking, which usually leads to much more used instances compared to previous studies (see Table~\ref{tab:datas_models}).
As a result, each approach was tested on \textit{44237} instances in total, which is expected to be sufficient for thoroughly assessing their performance.
Finally, the performance of an approach on different testing instances were aggregated and reported.

\begin{table*}[tbp]
	\centering
	\caption{The attack performance achieved by the compared approaches in terms of attack success rate (Suc.), average modification rate (Mod.), average semantic similarity (Sim.), and average query number (\#Que.), in score-based and decision-based settings.
		{\color{black}On each dataset, the test result in terms of attack success rate, average modification rate, and average semantic similarity is indicated in gray if it was not significantly different from the best result (according to a Wilcoxon signed-rank test with significance level $p= 0.05$).}
		Note for success rate and semantic similarity, the higher the better; while for modification rate and query number, the lower the better.}
	\scalebox{0.68}{
		\begin{tabular}{ccS[table-format=2.2]S[table-format=2.2]S[table-format=1.3]S[table-format=4.1]S[table-format=2.2]S[table-format=2.2]S[table-format=1.3]S[table-format=4.1]S[table-format=2.2]S[table-format=2.2]S[table-format=1.3]S[table-format=4.1]}
			\toprule
			\multicolumn{14}{c}{\textbf{Score-based Setting}} \\
			\midrule
			\multirow{2}[4]{*}{Dataset} & \multirow{2}[4]{*}{Attack} & \multicolumn{4}{c}{BERT} & \multicolumn{4}{c}{WordLSTM} & \multicolumn{4}{c}{WordCNN} \\
			\cmidrule(lr){3-6} \cmidrule(lr){7-10} \cmidrule(lr){11-14}   &       & {Suc. (\%)} & {Mod. (\%)} & {Sim.} &{\#Que.} & {Suc. (\%)} & {Mod. (\%)} & {Sim.} &{\#Que.}  & {Suc. (\%)} & {Mod. (\%)} & {Sim.} &{\#Que.} \\
			\midrule
			\multirow{5}[2]{*}{IMDB} & GA    & 77.72 & 10.44 & 0.719 &3031.6 & 82.44 & 10.14 & 0.714 & 2440.0 & 84.02 & 10.07 & 0.733 & 2351.3 \\
			& PWWS  & 62.38 & 6.83 & 0.726 & {\cellcolor{gray!60}} 122.5 & 70.54 & 6.54 & 0.732 & {\cellcolor{gray!60}} 120.8 & 74.58 & 6.28 & 0.748 & {\cellcolor{gray!60}} 120.1\\
			& TextFooler & 99.25 & 5.81 & 0.787 & 220.9 & {\cellcolor{gray!60}} 99.66 & 5.60 & 0.779 & 214.2 & {\cellcolor{gray!60}} 99.56 & 5.44 & 0.795 & 207.4 \\
			& PSO   & {\cellcolor{gray!60}} 99.91 & 4.54 & 0.771 & 2911.6 & {\cellcolor{gray!60}} 99.85 & 4.50 & 0.768 & 2928.9 & {\cellcolor{gray!60}} 99.90 & 4.37 & 0.785 & 2790.2\\
			& HydraText & {\cellcolor{gray!60}} 100.00 & {\cellcolor{gray!60}} 3.74 & {\cellcolor{gray!60}} 0.851 & 1324.3 & {\cellcolor{gray!60}} 99.90 & {\cellcolor{gray!60}} 3.66 & {\cellcolor{gray!60}} 0.841 & 1175.7 & {\cellcolor{gray!60}} 99.90 & {\cellcolor{gray!60}} 3.54 & {\cellcolor{gray!60}} 0.860 & 1105.0\\
			\midrule
			\multirow{5}[2]{*}{MR} & GA    & 57.29 & 11.29 & 0.575 & 794.9 & 64.71 & 11.68 & 0.668 & 476.3 & 62.14 & 11.16 & 0.683 &477.9\\
			& PWWS  & 44.47 & 11.20 & 0.609 &{\cellcolor{gray!60}} 36.3& 54.86 & 9.70 & 0.644 & {\cellcolor{gray!60}} 35.4 & 59.66 & 11.83 & 0.604 & {\cellcolor{gray!60}} 35.4\\
			& TextFooler & 78.59 & 14.67 & 0.597 &127.9 & {\cellcolor{gray!60}} 88.49 & 10.63 & 0.653 & 102.2& {\cellcolor{gray!60}} 90.08 & 12.86 & 0.623 & 100.9 \\
			& PSO   & 84.59 & 11.21 & 0.615 & 502.4& 87.21 & 9.47 & 0.652 & 390.9 & 86.03 & {\cellcolor{gray!60}} 11.32 & 0.624 & 377.7\\
			& HydraText & {\cellcolor{gray!60}} 86.12 & {\cellcolor{gray!60}} 10.79 & {\cellcolor{gray!60}} 0.702 & 272.1 & 86.83 & {\cellcolor{gray!60}} 9.36 & {\cellcolor{gray!60}} 0.724 & 244.4 & 87.08 & {\cellcolor{gray!60}} 11.31 & {\cellcolor{gray!60}} 0.714 & 262.8\\
			\midrule
			\multirow{5}[1]{*}{AG News} & GA   & 34.78 & 12.75 & 0.585 & 4326.0 & 37.74 & 12.79 & 0.687 & 3843.5 & 56.72 & 12.88 & 0.723 & 2606.3\\
			& PWWS  & 46.96 & 15.34 & 0.494 & {\cellcolor{gray!60}} 71.7 & 56.48 & 14.68 & 0.518 & {\cellcolor{gray!60}} 282.6 & 71.10 & 11.42 & 0.601 & {\cellcolor{gray!60}} 68.3\\
			& TextFooler & 57.54 & 12.87 & 0.647 & 337.9& 68.06 & 12.22 & 0.649 & 1362.5 & {\cellcolor{gray!60}} 89.96 & 11.24 & 0.708 & 190.8\\
			& PSO   & 66.38 & 13.56 & 0.605 & 5126.2 & 62.90 & 11.71 & 0.646 & 5167.7& 83.88 & 10.47 & 0.705 & 3170.9\\
			& HydraText & {\cellcolor{gray!60}} 75.04 & {\cellcolor{gray!60}} 11.72 & {\cellcolor{gray!60}} 0.676 & 1356.8 & {\cellcolor{gray!60}} 70.10 & {\cellcolor{gray!60}} 10.92 & {\cellcolor{gray!60}} 0.704 & 1362.5 &87.14 & {\cellcolor{gray!60}} 9.45 & {\cellcolor{gray!60}} 0.771 & 957.4\\
			\midrule
			\multirow{2}[3]{*}{Dataset} & \multirow{2}[3]{*}{Attack} & \multicolumn{4}{c}{BERT} & \multicolumn{4}{c}{Infersent} & \multicolumn{4}{c}{ESIM} \\
			\cmidrule(lr){3-6} \cmidrule(lr){7-10} \cmidrule(lr){11-14}	&       & {Suc. (\%)} & {Mod. (\%)} & {Sim.} &{\#Que.} & {Suc. (\%)} & {Mod. (\%)} & {Sim.} &{\#Que.}  & {Suc. (\%)} & {Mod. (\%)} & {Sim.} &{\#Que.} \\
			\midrule
			\multirow{5}[2]{*}{SNLI} & GA    & 59.76 & 11.23 & 0.511 & 195.8 & 68.80 & 11.59 & 0.534 & 127.6 & 60.40 & 11.67 & 0.526 & 177.7\\
			& PWWS  & 54.62 & 11.11 & 0.452 & {\cellcolor{gray!60}} 16.8 & 60.46 & 10.95 & 0.465 & {\cellcolor{gray!60}} 16.5 & 54.25 & 10.85 & 0.460 & {\cellcolor{gray!60}} 17.0\\
			& TextFooler & 88.90 & 11.05 & 0.547 & 68.9 & 93.97 & 11.25 & 0.592 & 63.3 & 90.29 & 11.22 & 0.572 & 68.4\\
			& PSO   & 91.94 & 11.29 & 0.444 & 140.6 & {\cellcolor{gray!60}} 95.38 & 11.23 & 0.496 & 88.7& {\cellcolor{gray!60}} 93.33 & 11.55 & 0.476 & 130.3\\
			& HydraText & {\cellcolor{gray!60}} 92.23 & {\cellcolor{gray!60}} 10.30 & {\cellcolor{gray!60}} 0.607 & 125.4 & {\cellcolor{gray!60}} 95.19 & {\cellcolor{gray!60}} 10.77 & {\cellcolor{gray!60}} 0.637 & 110.8 &{\cellcolor{gray!60}} 93.41 & {\cellcolor{gray!60}} 10.80 & {\cellcolor{gray!60}} 0.602 & 126.4\\
			\midrule
			\multirow{5}[2]{*}{MNLI} & GA    & 65.88 & 11.28 & 0.580 & 322.9 & 73.53 & 11.19 & 0.607 & 231.4 & 66.40 & 11.14 & 0.603 & 292.9\\
			& PWWS  & 57.84 & 10.26 & 0.532 & {\cellcolor{gray!60}} 20.7 & 58.75 & 10.33 & 0.564 & {\cellcolor{gray!60}} 20.8 & 57.43 & 10.32 & 0.542 & {\cellcolor{gray!60}} 20.8\\
			& TextFooler & 85.90 & 10.81 & 0.617 & 81.1 & 90.03 & 10.64 & 0.651 & 76.0 & 87.14 & 10.67 & 0.628 & 79.1\\
			& PSO   & {\cellcolor{gray!60}} 89.20 & 10.70 & 0.568 & 159.6 & 91.00 & 10.37 & 0.597 & 174.8 & {\cellcolor{gray!60}} 87.35 &10.35 & 0.583 & 170.6\\
			& HydraText & {\cellcolor{gray!60}} 89.18 & {\cellcolor{gray!60}} 10.00 & {\cellcolor{gray!60}} 0.680 & 139.1 & {\cellcolor{gray!60}} 91.18 & {\cellcolor{gray!60}} 10.11 & {\cellcolor{gray!60}} 0.707 & 143.9 & {\cellcolor{gray!60}} 87.31 & {\cellcolor{gray!60}} 9.82 & {\cellcolor{gray!60}} 0.688 & 141.3\\
			\midrule
			\midrule
			\multicolumn{14}{c}{\textbf{Decision-based Setting}} \\
			\midrule
			\multirow{2}[3]{*}{Dataset} & \multirow{2}[3]{*}{Attack} & \multicolumn{4}{c}{BERT} & \multicolumn{4}{c}{WordLSTM} & \multicolumn{4}{c}{WordCNN} \\
			\cmidrule(lr){3-6} \cmidrule(lr){7-10} \cmidrule(lr){11-14}  &       & {Suc. (\%)} & {Mod. (\%)} & {Sim.} &{\#Que.} & {Suc. (\%)} & {Mod. (\%)} & {Sim.} &{\#Que.}  & {Suc. (\%)} & {Mod. (\%)} & {Sim.} &{\#Que.} \\
			\midrule
			\multirow{2}[1]{*}{IMDB} & GADe  & 99.72 & 5.68 & 0.813 & 1740.0 & 99.27 & 5.76 & 0.809 & 1664.8 & 99.46 & 5.78 & 0.820 & 1612.6\\
			& HydraText & {\cellcolor{gray!60}} 99.95 & {\cellcolor{gray!60}} 3.95 & {\cellcolor{gray!60}} 0.853 & {\cellcolor{gray!60}} 1229.1 & {\cellcolor{gray!60}} 99.66 & {\cellcolor{gray!60}} 4.16 & {\cellcolor{gray!60}} 0.846 & {\cellcolor{gray!60}} 1519.1 & {\cellcolor{gray!60}} 99.76 & {\cellcolor{gray!60}} 4.18 & {\cellcolor{gray!60}} 0.860 & {\cellcolor{gray!60}} 1412.0\\
			\midrule
			\multirow{2}[2]{*}{MR} & GADe  & 89.06 & 10.56 & 0.680 & {\cellcolor{gray!60}} 544.0 & 89.39 & 10.78 & 0.675 & {\cellcolor{gray!60}} 485.0 & 91.78 & 10.72 & 0.680 & {\cellcolor{gray!60}} 478.2\\
			& HydraText & {\cellcolor{gray!60}} 93.88 & {\cellcolor{gray!60}} 9.06 & {\cellcolor{gray!60}} 0.748 & 954.9 & {\cellcolor{gray!60}} 93.73 & {\cellcolor{gray!60}} 9.19 & {\cellcolor{gray!60}} 0.762 & 772.5 & {\cellcolor{gray!60}} 95.30 & {\cellcolor{gray!60}} 9.18 & {\cellcolor{gray!60}} 0.772 & 718.8\\
			\midrule
			\multirow{2}[2]{*}{AG News} & GADe  & 91.32 & 11.87 & 0.693 & {\cellcolor{gray!60}} 3425.0 &89.06 & 12.09 & 0.710 & {\cellcolor{gray!60}} 3393.3 & 95.82 & 10.21 & 0.776 & {\cellcolor{gray!60}} 2226.5\\
			& HydraText & {\cellcolor{gray!60}} 96.32 & {\cellcolor{gray!60}} 8.69 & {\cellcolor{gray!60}} 0.729 & 3803.6 & {\cellcolor{gray!60}} 94.60 & {\cellcolor{gray!60}} 9.08 & {\cellcolor{gray!60}} 0.746 & 3537.0 & {\cellcolor{gray!60}} 98.22 & {\cellcolor{gray!60}} 7.68 & {\cellcolor{gray!60}} 0.815 & 2576.0\\
			\midrule
			\multirow{2}[4]{*}{Dataset} & \multirow{2}[4]{*}{Attack} & \multicolumn{4}{c}{BERT} & \multicolumn{4}{c}{Infersent} & \multicolumn{4}{c}{ESIM} \\
			\cmidrule(lr){3-6} \cmidrule(lr){7-10} \cmidrule(lr){11-14}	&       & {Suc. (\%)} & {Mod. (\%)} & {Sim.} &{\#Que.} & {Suc. (\%)} & {Mod. (\%)} & {Sim.} &{\#Que.}  & {Suc. (\%)} & {Mod. (\%)} & {Sim.} &{\#Que.} \\
			\midrule
			\multirow{2}[2]{*}{SNLI} & GADe  & 88.98 & 11.45 & 0.493 & {\cellcolor{gray!60}} 243.1& 93.13 & 11.56 & 0.522 & {\cellcolor{gray!60}} 181.0 & 89.90 & 12.24 & 0.512 & {\cellcolor{gray!60}} 259.8\\
			& HydraText & {\cellcolor{gray!60}} 94.94 & {\cellcolor{gray!60}} 9.88 & {\cellcolor{gray!60}} 0.638 & 481.3 & {\cellcolor{gray!60}} 98.38 & {\cellcolor{gray!60}} 9.91 & {\cellcolor{gray!60}} 0.679 & 257.6 & {\cellcolor{gray!60}} 96.27 & {\cellcolor{gray!60}} 10.46 & {\cellcolor{gray!60}} 0.662 & 524.1\\
			\midrule
			\multirow{2}[2]{*}{MNLI} & GADe  & 91.54 & 10.88 & 0.585& {\cellcolor{gray!60}} 278.8 & 91.45 & 11.07 & 0.619 & {\cellcolor{gray!60}} 308.6 & 93.03 & 10.67 & 0.604 & {\cellcolor{gray!60}} 253.7\\
			& HydraText & {\cellcolor{gray!60}} 96.60 & {\cellcolor{gray!60}} 9.55 & {\cellcolor{gray!60}} 0.712 & 579.3 & {\cellcolor{gray!60}} 96.12 & {\cellcolor{gray!60}} 9.59 & {\cellcolor{gray!60}} 0.751 & 1230.4 & {\cellcolor{gray!60}} 96.39 & {\cellcolor{gray!60}} 9.41 & {\cellcolor{gray!60}} 0.727 & 833.6\\
			\bottomrule
	\end{tabular}}
	\label{tab:main_results}
\end{table*}

\begin{table}[tbp]
	\centering
	\caption{The grammaticality and fluency results of the AEs crafted by the compared approaches.
		Grammaticality is measured by grammatical error increase rate (EIR).
		Fluency is measured by the perplexity (PPL) of GPT-2.
		{\color{black}On each dataset, the test result is indicated in gray if it was not significantly different from the best result (according to a Wilcoxon signed-rank test with significance level $p= 0.05$).}
		For both metrics, the lower the better.}
	\scalebox{0.68}{
		\begin{tabular}{ccS[table-format=2.2]S[table-format=3.1]S[table-format=2.2]S[table-format=3.1]S[table-format=2.2]S[table-format=3.1]}
			\toprule
			\multicolumn{8}{c}{\textbf{Score-based Setting}} \\
			\midrule
			\multirow{2}[4]{*}{Dataset} & \multirow{2}[4]{*}{Attack} & \multicolumn{2}{c}{BERT} & \multicolumn{2}{c}{WordLSTM} & \multicolumn{2}{c}{WordCNN} \\
			\cmidrule(lr){3-4} \cmidrule(lr){5-6} \cmidrule(lr){7-8}	&       & {EIR (\%)}   & {PPL}   & {EIR (\%)}   & {PPL}   & {EIR (\%)}   & {PPL} \\
			\midrule
			\multirow{5}[2]{*}{IMDB} & GA    & 5.66 & 175.3 & 5.52 & 166.5 & 5.92 & 166.1 \\
			& PWWS  & 2.94 & 141.5 & 2.12 & 129.5 & 2.92 & 126.5 \\
			& TF    & 3.23 & 132.3 & 3.47 & 120.7 & 4.17 & 119.1 \\
			& PSO   & {\cellcolor{gray!60}} 0.97 & 127.7 & {\cellcolor{gray!60}} 1.12 & 123.3 & 1.89 & 118.9 \\
			& HydraText & 1.21 & {\cellcolor{gray!60}} 113.7 & 1.18 & {\cellcolor{gray!60}} 110.2 & {\cellcolor{gray!60}} 1.61 & {\cellcolor{gray!60}} 107.7 \\
			\midrule
			\multirow{5}[2]{*}{MR} & GA    & 3.61 & 412.4 & 5.45 & 530.4 & 5.85 & 528.6 \\
			& PWWS  & 3.44 & 569.2 & 5.72 & 467.9 & 6.25 & 610.4 \\
			& TF    & 4.95 & 407.9 & 5.77 & 409.8 & 5.68 & 468.8 \\
			& PSO   & {\cellcolor{gray!60}} {\cellcolor{gray!60}}1.20 & 574   & {\cellcolor{gray!60}} 1.27 & 464.8 & 1.92 & 654.3 \\
			& HydraText & {\cellcolor{gray!60}} 1.18 & {\cellcolor{gray!60}} 333.4 & 1.31 & {\cellcolor{gray!60}} 378.8 & {\cellcolor{gray!60}} 1.24 & {\cellcolor{gray!60}} 375.6 \\
			\midrule
			\multirow{5}[2]{*}{AG News} & GA    & 4.41 & 355.2 & 4.95 & 338.1 & 5.20 & 365.5 \\
			& PWWS  & 5.85 & 356.2 & 4.78 & 356.1 & 5.29 & 330.6 \\
			& TF    & 3.12 & 360.3 & 4.30 & 339.4 & 3.85 & 319.7 \\
			& PSO   & 0.67 & 387   & 1.48 & 403.5 & 1.54 & 367.1 \\
			& HydraText & {\cellcolor{gray!60}} 0.54 & {\cellcolor{gray!60}} 218.6 & {\cellcolor{gray!60}} 1.20 & {\cellcolor{gray!60}} 231.3 & {\cellcolor{gray!60}} 1.36 & {\cellcolor{gray!60}} 245.8 \\
			\midrule
			\multirow{2}[4]{*}{Dataset} & \multirow{2}[4]{*}{Attack} & \multicolumn{2}{c}{BERT} & \multicolumn{2}{c}{Infersent} & \multicolumn{2}{c}{ESIM} \\
			\cmidrule(lr){3-4} \cmidrule(lr){5-6} \cmidrule(lr){7-8}  &       & {EIR (\%)}   & {PPL}   & {EIR (\%)}   & {PPL}   & {EIR (\%)}   & {PPL} \\
			\midrule
			\multirow{5}[2]{*}{SNLI} & GA    & 14.19 & 328.6 & 17.77 & 353.4 & 17.73 & 363.7 \\
			& PWWS  & 14.06 & 306.6 & 12.41 & 342   & 14.36 & 327.4 \\
			& TF    & 25.81 & 329.7 & 28.01 & 310.9 & 25.46 & 328.1 \\
			& PSO   & 8.75 & 316.5 & {\cellcolor{gray!60}} 6.13 & 319.5 & 7.28 & 327.5 \\
			& HydraText & {\cellcolor{gray!60}} 7.60 & {\cellcolor{gray!60}} 300.5 & 7.34 & {\cellcolor{gray!60}} 299.3 & {\cellcolor{gray!60}} 6.47 & {\cellcolor{gray!60}} 310.2 \\
			\midrule
			\multirow{5}[2]{*}{MNLI} & GA    & 15.65 & 419.6 & 16.17 & 410.2 & 17.53 & 440.4 \\
			& PWWS  & 5.09 & {\cellcolor{gray!60}} 320.5 & 7.07 & {\cellcolor{gray!60}} 346.1 & 5.84 & {\cellcolor{gray!60}} 325.7 \\
			& TF    & 13.61 & 400.9 & 15.07 & 387.2 & 17.35 & 391.9 \\
			& PSO   & {\cellcolor{gray!60}} 1.91 & 409.5 & {\cellcolor{gray!60}} 2.16 & 399.9 & {\cellcolor{gray!60}} 2.20 & 411.4 \\
			& HydraText & 2.35 & 399.1 & 2.61 & 387.8 & 2.32 & 390.8 \\
			\midrule
			\midrule
			\multicolumn{8}{c}{\textbf{Decision-based Setting}} \\
			\midrule
			
			\multirow{2}[4]{*}{Dataset} & \multirow{2}[4]{*}{Attack} & \multicolumn{2}{c}{BERT} & \multicolumn{2}{c}{WordLSTM} & \multicolumn{2}{c}{WordCNN} \\
			\cmidrule(lr){3-4} \cmidrule(lr){5-6} \cmidrule(lr){7-8} &       & {EIR (\%)}   & {PPL}   & {EIR (\%)}   & {PPL}   & {EIR (\%)}   & {PPL} \\
			\midrule
			\multirow{2}[2]{*}{IMDB} & GADe  & 3.67 & 126.7 & 3.67 & 125.6 & 4.10 & 125.2 \\
			& HydraText & {\cellcolor{gray!60}} 2.43 & {\cellcolor{gray!60}} 111.7 & {\cellcolor{gray!60}} 2.61 & {\cellcolor{gray!60}} 110.6 & {\cellcolor{gray!60}} 3.15 & {\cellcolor{gray!60}} 110.5 \\
			\midrule
			\multirow{2}[2]{*}{MR} & GADe  & 4.88 & 432.9 & 4.67 & 488.3 & 6.32 & 603.7 \\
			& HydraText & {\cellcolor{gray!60}} 3.80 & {\cellcolor{gray!60}} 395.6 & {\cellcolor{gray!60}} 3.62 & {\cellcolor{gray!60}} 452.1 & {\cellcolor{gray!60}} 4.43 & {\cellcolor{gray!60}} 499.1 \\
			\midrule
			\multirow{2}[2]{*}{AG News} & GADe  & 3.61 & 350.2 & 4.52 & 345.4 & 4.28 & 318.6 \\
			& HydraText & {\cellcolor{gray!60}} 2.21 & {\cellcolor{gray!60}} 292.3 & {\cellcolor{gray!60}} 3.41 & {\cellcolor{gray!60}} 286.8 & {\cellcolor{gray!60}} 3.82 & {\cellcolor{gray!60}} 270.4 \\
			\midrule
			\multirow{2}[4]{*}{Dataset} & \multirow{2}[4]{*}{Attack} & \multicolumn{2}{c}{BERT} & \multicolumn{2}{c}{Infersent} & \multicolumn{2}{c}{ESIM} \\
			\cmidrule(lr){3-4} \cmidrule(lr){5-6} \cmidrule(lr){7-8} &       & {EIR (\%)}   & {PPL}   & {EIR (\%)}   & {PPL}   & {EIR (\%)}   & {PPL} \\
			\midrule
			\multirow{2}[2]{*}{SNLI} & GADe  & 24.82 & 364.1 & 30.14 & 361.1 & 27.91 & 382.4 \\
			& HydraText & {\cellcolor{gray!60}} 19.33 & {\cellcolor{gray!60}} 294.6 & {\cellcolor{gray!60}} 26.80 & {\cellcolor{gray!60}} 295.7 & {\cellcolor{gray!60}} 19.24 & {\cellcolor{gray!60}} 306.9 \\
			\midrule
			\multirow{2}[2]{*}{MNLI} & GADe  & 14.36 & 438   & 16.54 & 460.2 & 16.68 & 437.7 \\
			& HydraText & {\cellcolor{gray!60}} 11.55 & {\cellcolor{gray!60}} 377.9 & {\cellcolor{gray!60}} 13.11 & {\cellcolor{gray!60}} 385.6 & {\cellcolor{gray!60}} 12.93 & {\cellcolor{gray!60}} 373.3 \\
			\bottomrule
		\end{tabular}
	}
	\label{tab:complete_quality}%
\end{table}%

\subsection{Results and Analysis}
\label{sec:results}
Table~\ref{tab:main_results} presents the attack performance of HydraText and the baselines in terms of success rate, modification rate, semantic similarity, and query number.
{
\color{black}
Additionally, on each dataset, we utilized the Wilcoxon signed-rank test (with significance level $p= 0.05$) to determine whether the differences  between the test results were significant.
}

The first observation from these results is that compared to the baselines, HydraText performed consistently better in crafting imperceptible attacks.
Specifically, on \textit{all} datasets, it \textcolor{black}{significantly outperformed} \textit{all} the baselines in terms of modification rate and semantic similarity, indicating the necessity of directly optimizing attack imperceptibility during the search for AEs.
More importantly, such performance was achieved without compromising on attack success rates.
In general, HydraText could achieve quite competitive attack success rates compared to the baselines.
\textcolor{black}{For example, in score-based setting, on 12 out of 15 datasets, HydraText achieved the highest success rates, which is the most among all the approaches.}
Notably, when attacking BERT/WordLSTM/WordCNN on the IMDB dataset, HydraText achieved 100/99.90/99.90 success rates.
In decision-based setting, on all the 15 datasets, HydraText achieved \textcolor{black}{significantly} higher success rates than GADe.
We speculate that the reason for the strong attacking ability of HydraText is its integration of multiple mutation operators, enabling the algorithm to explore a much larger neighborhood solution space than the baselines which generally only use one type of mutation.

In terms of query number, as expected, the simple greedy heuristic-based approaches PWWS and TextFooler consumed much fewer model queries than those population-based approaches.
From the perspective of heuristics, the former are construction heuristics which are fast but may not be effective, while the latter are search heuristics which are more time-consuming but can find better solutions.
Indeed, in Table~\ref{tab:main_results} PWWS and TextFooler often achieved \textcolor{black}{significantly lower} success rates than HydraText.
For example, compared to PWWS, HydraText generally obtained much higher success rates, typically by around 20\%-40\%.
\textcolor{black}{Compared to the more state-of-the-art TextFooler, HydraText still showed significant advantages in terms of success rates on 10 out of 15 datasets.}
In score-based setting, on 14 out of 15 datasets, HydraText consumed fewer queries than GA and PSO, making it the most query-efficient population-based approach.
In decision-based setting, HydraText required 19.5\% more queries than GADe on average.
Such results are conceivable because GADe searches from a randomly initialized solution which is already a successful AE yet far from the original input, while HydraText searches from the original input.

In summary, thanks to its multi-objectivization mechanism, HydraText indeed could craft more imperceptible textual attacks, without compromising on attack success rate.
On the other hand,  solving the induced multi-objective optimization problem would also consume a considerable number of queries, especially when compared to greedy heuristic-based approaches.
In future work, it is possible to improve the query efficiency of HydraText by leveraging word importance (like TextFooler) or learning-based models that directly predict a solution~\cite{LiuZTY23}.

Table~\ref{tab:complete_quality} presents the grammaticality and fluency results of the AEs crafted by the compared approaches.
Overall, the AEs crafted by HydraText have better quality than the ones crafted by the baselines.
On all the 15 datasets, HydraText could achieve at least top-2 performance regarding either EIR or PPL.
Specifically, in decision-based setting, it always \textcolor{black}{significantly} outperformed GADe in both EIR and PPL.
In score-based setting, HydraText achieved the lowest EIR and the lowest PPL, on 8 and 12 out of 15 datasets, respectively.
In comparison, for the second-best approach PSO, the corresponding numbers are \textcolor{black}{8} and 0, respectively.
The above results indicate that compared to the baselines, HydraText could craft AEs that are less detectable by automatic detectors.

\begin{table}[tbp]
	\centering
	\caption{Human evaluation results of validity (percentage of valid AEs) and naturality (average naturality score) of the crafted AEs on attacking BERT on the MR dataset.
		For each metric, the best value is indicated in gray.
		For both metrics, the higher the better.}
	\scalebox{0.8}{
		\begin{tabular}{cS[table-format=2.2]S[table-format=1.2]}
			\toprule
			Attack & \multicolumn{1}{l}{Percentage of Validity AEs (\%)} & \multicolumn{1}{l}{Average Naturality Score} \\
			\midrule
			GADe  & 62.96 & 2.96 \\
			PSO   & 66.67  & 3.18 \\
			HydraText &  {\cellcolor{gray!60}} 77.78 & {\cellcolor{gray!60}} 3.59 \\
			\bottomrule
	\end{tabular}}
	\label{tab:human_eval}%
\end{table}%

\subsection{Human Evaluation}
\label{sec:human_eval}
To assess how likely the attacks are to be perceived by humans, we performed a human evaluation study on 400 AEs crafted by HydraText and the best competitors PSO and GADe for attacking BERT on the MR dataset, respectively.
With questionnaire survey (see Appendix~\ref{appendix:humaneval} for an example), we asked five movie fans (volunteers) to make a binary sentiment classification (i.e., labeling it as ``positive'' or ``negative''), and give a naturality score chosen from $\{1,2,3,4,5\}$ indicating machine-generated, more like machine-generated, not sure, more like human-written and human-written, respectively.
The final sentiment labels were determined by majority voting, and the final naturality scores were determined by averaging.
Note that an AE is valid if its human prediction (true label) is the same as the original input.
For example, supposing an attacker changes ``good'' to ``bad'' in an input sentence ``the movie is good'' to attack a sentiment classification model, although the target model alters its prediction result, this attack is invalid. 

Table~\ref{tab:human_eval} presents the final results of human evaluation, in terms of the percentage of valid AEs and the average naturality score.
Compared to the baselines, HydraText performed better in crafting valid and natural AEs.
Specifically, 77.78\% of the AEs crafted by it were judged by humans to be valid, which was at least 10\% higher than the baselines.
Notably, it achieved an average naturality socre of 3.59.
Recall that the naturality scores of 3 and 4 indicate not-sure and more like human-written, respectively.
Such results demonstrate that between not-sure and more like human-written, human evaluators tend to consider the AEs crafted by HydratText as the latter.
In contrast, the AEs crafted by PSO and GADe are considered more likely to be the former.

\begin{table*}[tbp]
	\centering
	\caption{Some AEs crafted by HydraText, PSO and GADe when attacking BERT on the MR dataset.}
	\scalebox{0.68}{
		\begin{tabular}{c}
			\toprule[.2em]
			\multicolumn{1}{c}{MR Example 1} \\
			\midrule[.1em]
			\multicolumn{1}{l}{\textbf{Original Input} (Prediction=Positive): the draw for Big Bad Love is a solid performance by Arliss Howard} \\
			\midrule[.1em]
			\multicolumn{1}{l}{\textbf{HydraText} (Prediction=Negative): the draw for Big Bad Love is a \color{red}{dependable} performance by Arliss Howard} \\
			\midrule
			\multicolumn{1}{l}{\textbf{GADe} (Prediction=Negative): the draw for \color{red}{Largest} Bad Love is a \color{red}{sturdy} \color{red}{discharging} by Arliss Howard} \\
			\midrule
			\multicolumn{1}{l}{\textbf{PSO} (Prediction=Negative):the draw for Big Bad Love is a \color{red}{much} performance by Arliss Howard} \\
			\midrule[.2em]
			\multicolumn{1}{c}{MR Example 2} \\
			\midrule[.1em]
			\multicolumn{1}{l}{\shortstack[l]{\textbf{Original Input} (Prediction=Positive): Insomnia does not become one of those rare remakes to eclipse the\\ original, but it doesn't disgrace it, either.}} \\
			\midrule[.1em]
			\multicolumn{1}{l}{\shortstack[l]{\textbf{HydraText} (Prediction=Negative): Insomnia does not become one of those rare remakes to eclipse the\\ original, but it doesn't \color{red}{humiliate} it, either.}} \\
			\midrule
			\multicolumn{1}{l}{\shortstack[l]{\textbf{GADe} (Prediction=Negative): Insomnia does not become one of those rare remakes to eclipse the\\ \color{red}{special}, but it doesn't \color{red}{infamy} it, either.}} \\
			\midrule
			\multicolumn{1}{l}{\shortstack[l]{\textbf{PSO} (Prediction=Negative): Insomnia does not become one of those \color{red}{little} remakes to eclipse the\\ \color{red}{special}, but it doesn't \color{red}{humiliate} it, either.}} \\
			\midrule[.2em]
			\multicolumn{1}{c}{MR Example 3} \\
			\midrule[.1em]
			\multicolumn{1}{l}{\shortstack[l]{\textbf{Original Input} (Prediction=Negative): Paul Bettany is good at being the ultra violent gangster wannabe,\\ but the movie is certainly not number 1}} \\
			\midrule[.1em]
			\multicolumn{1}{l}{\shortstack[l]{\textbf{HydraText} (Prediction=Positive):  Paul Bettany is \color{red}{skillful} at being the ultra violent gangster wannabe,\\ but the movie is \color{red}{surely} not number 1}} \\
			\midrule
			\multicolumn{1}{l}{\shortstack[l]{\textbf{GADe} (Prediction=Positive):  Paul Bettany is good at being the ultra violent gangster wannabe,\\ but the movie is \color{red}{soberly} \color{red}{either} \color{red}{figures} 1}} \\
			\midrule
			\multicolumn{1}{l}{\shortstack[l]{\textbf{PSO} (Prediction=Positive):  Paul Bettany is good at being the ultra violent gangster wannabe,\\ but the movie is certainly not \color{red}{variable} 1}} \\
			\midrule[.2em]
			\multicolumn{1}{c}{MR Example 4} \\
			\midrule[.1em]
			\multicolumn{1}{l}{\textbf{Original Input} (Prediction=Negative): as saccharine as it is disposable} \\
			\midrule[.1em]
			\multicolumn{1}{l}{\textbf{HydraText} (Prediction=Positive): as saccharine as it is \color{red}{expendable}} \\
			\midrule
			\multicolumn{1}{l}{\textbf{GADe} (Prediction=Positive): as saccharine as it is \color{red}{usable}} \\
			\midrule
			\multicolumn{1}{l}{\textbf{PSO} (Prediction=Positive): as saccharine as it is \color{red}{convenient}} \\
			\midrule[.2em]
			\multicolumn{1}{c}{MR Example 5} \\
			\midrule[.1em]
			\multicolumn{1}{l}{\textbf{Original Input} (Prediction=Positive): a terrific date movie, whatever your orientation} \\
			\midrule[.1em]
			\multicolumn{1}{l}{\textbf{HydraText} (Prediction=Negative): a \color{red}{glamorous} date movie, whatever your orientation} \\
			\midrule
			\multicolumn{1}{l}{\textbf{GADe} (Prediction=Negative): a \color{red}{sumptuous yesterday} movie, whatever your orientation} \\
			\midrule
			\multicolumn{1}{l}{\textbf{PSO} (Prediction=Negative): a \color{red}{heavy} date movie, whatever your orientation} \\
			\midrule[.2em]
			\multicolumn{1}{c}{MR Example 6} \\
			\midrule[.1em]
			\multicolumn{1}{l}{\shortstack[l]{\textbf{Original Input} (Prediction=Positive): a thoroughly entertaining comedy that uses Grant's own twist of acidity to\\ prevent itself from succumbing to its own bathos}} \\
			\midrule[.1em]
			\multicolumn{1}{l}{\shortstack[l]{\textbf{HydraText} (Prediction=Negative): a thoroughly \color{red}{comical} comedy that uses Grant's own twist of acidity to\\ \color{red}{hinder} itself from succumbing to its own bathos}} \\
			\midrule
			\multicolumn{1}{l}{\shortstack[l]{\textbf{GADe} (Prediction=Negative): a thoroughly \color{red}{funnier} comedy that uses Grant's own twist of acidity to\\ \color{red}{hinder} itself from succumbing to its own bathos}} \\
			\midrule
			\multicolumn{1}{l}{\shortstack[l]{\textbf{PSO} (Prediction=Negative): a \color{red}{terribly} entertaining comedy that uses Grant's own twist of acidity to\\ \color{red}{block} itself from \color{red}{surrendering} to its own bathos}} \\
			\midrule[.2em]
			\multicolumn{1}{c}{MR Example 7} \\
			\midrule[.1em]
			\multicolumn{1}{l}{\shortstack[l]{\textbf{Original Input} (Prediction=Negative): With virtually no interesting elements for an audience to focus on,\\ Chelsea Walls is a triple-espresso endurance challenge}} \\
			\midrule[.1em]
			\multicolumn{1}{l}{\shortstack[l]{\textbf{HydraText} (Prediction=Positive): With \color{red}{almost} no interesting elements for an audience to focus on,\\ Chelsea Walls is a triple-espresso endurance challenge}} \\
			\midrule
			\multicolumn{1}{l}{\shortstack[l]{\textbf{GADe} (Prediction=Positive): With virtually no interesting elements for an audience to focus on,\\ Chelsea Walls is a triple-espresso \color{red}{vitality} challenge}} \\
			\midrule
			\multicolumn{1}{l}{\shortstack[l]{\textbf{PSO} (Prediction=Positive): With virtually no interesting elements for an \color{red}{viewer} to focus on,\\ Chelsea Walls is a \color{red}{inestimable}-espresso endurance challenge}} \\
			\bottomrule[.2em]
	\end{tabular}}
	\label{tab:cases}
\end{table*}

Combining the results in Table~\ref{tab:main_results}, Table~\ref{tab:complete_quality} and Table~\ref{tab:human_eval}, it can be concluded that among all the compared approaches, HydratText could craft the most imperceptible textual attacks to either automatic detectors or humans .
Table~\ref{tab:cases} lists some AEs crafted by HydraText and the best competitors PSO and GADe.

\begin{table}[tbp]
	\centering
	\caption{Evaluation Results on transferability of the crafted AEs.
		``B $\Rightarrow$ WL'' means transferring from BERT to WordLSTM and vice versa;
		the change in the model's accuracy on the AEs compared to its accuracy on the original input is presented.}
	\scalebox{0.8}{
		\begin{tabular}{ccccccc}
			\toprule
			\multirow{2}[4]{*}{Transfer} & \multicolumn{3}{c}{Decision-based Setting} & \multicolumn{3}{c}{Score-based Setting} \\
			\cmidrule(lr){2-4}\cmidrule(lr){5-7}
			& IMDB  & MR    & AGNews & IMDB  & MR    & AGNews \\
			\cmidrule(lr){1-7}
			B $\Rightarrow$ WL & -12.64 & -19.39 & -7.06 & -15.49 & -21.00 & -7.11 \\
			WL $\Rightarrow$ B & -9.07 & -13.45 & -9.07 & -12.70 & -16.86 & -12.70 \\
			\bottomrule
	\end{tabular}}
	\label{tab:transfer}
\end{table}
\begin{table}[tbp]
	\centering
	\caption{The evaluation results of using HydraText to attack BERT before (Original) and after adversarial training (Adv. T), in terms of attack success rate (S), average modification rate (M) and average semantic similarity (Sim.). For each metric, the best value is indicated in gray.}
	\scalebox{0.8}{
		\begin{tabular}{cS[table-format=3.2]S[table-format=1.2]S[table-format=1.3]S[table-format=3.2]S[table-format=1.2]S[table-format=1.3]}
			\toprule
			\multirow{2}[4]{*}{Model} & \multicolumn{3}{c}{IMDB} & \multicolumn{3}{c}{MR} \\
			\cmidrule(lr){2-4}\cmidrule(lr){5-7}
			& {S (\%)} & {M (\%)} &  {Sim.} & {S (\%)} & {M (\%)}  & {Sim.} \\
			\midrule
			Original & {\cellcolor{gray!60}} 100.00 & {\cellcolor{gray!60}} 3.74 & {\cellcolor{gray!60}} 0.851 & {\cellcolor{gray!60}} 86.12 & {\cellcolor{gray!60}} 9.37 & {\cellcolor{gray!60}} 0.720 \\   
			Adv. T & 96.12 & 5.88 & 0.793 & 82.50 & 10.47 & 0.691 \\
			\bottomrule
	\end{tabular}}
	\label{tab:adv_train}
\end{table}

\subsection{Transferability and Adversarial Training}
The transferability of an AE refers to its ability to attack other unseen models \cite{GoodfellowSS15}.
We evaluated the transferability on the IMDB, MR and AG News datasets, by transferring AEs between BERT and WordLSTM.
More specifically, on each dataset, we used WordLSTM to classify the AEs crafted for attacking BERT (denoted as ``B $\Rightarrow$ WL''), and vice versa.
Table~\ref{tab:transfer} demonstrates that in either score-based or decision-based setting, on each dataset, no matter transferred in either direction, the AEs crafted by HydraText could always result in a significant decrease in the accuracy of the tested model, indicating their good transferability.

By incorporating AEs into the training process, adversarial training is aimed at improving the robustness of target models.
We generated AEs by using score-based HydraText to attack BERT on 8\% instances randomly sampled from the training sets of IMDB dataset, and included them into the training sets and retrained BERT.
We then attacked the retrained model by score-based HydraText.
The above procedure is repeated for the MR dataset.
As presented in Table~\ref{tab:adv_train}, it can be found that on the retained models, the attack performance significantly degrades in terms of all the three evaluation metrics.
This indicates that the AEs crafted by HydraText can indeed bring robustness improvement to the target models.

\begin{table*}[htbp]
	\centering
	\caption{The attack performance achieved by the compared approaches when crafting targeted attacks, in terms of attack success rate (Suc.), average modification rate (Mod.), average semantic similarity (Sim.), and average query number (\#Que.), in score-based and decision-based settings.
	{\color{black}On each dataset, the test result in terms of attack success rate, average modification rate, and average semantic similarity is indicated in gray if it was not significantly different from the best result (according to a Wilcoxon signed-rank test with significance level $p= 0.05$).}
		Note for success rate and semantic similarity, the higher the better; while for modification rate and query number, the lower the better.}
	\scalebox{0.68}{
		\begin{tabular}{ccS[table-format=2.2]S[table-format=2.2]S[table-format=1.3]S[table-format=4.1]S[table-format=2.2]S[table-format=2.2]S[table-format=1.3]S[table-format=4.1]S[table-format=2.2]S[table-format=2.2]S[table-format=1.3]S[table-format=4.1]}
			\toprule
			\multicolumn{14}{c}{\textbf{Score-based Setting}} \\
			\midrule
			\multirow{2}[4]{*}[2pt]{Dataset} & \multirow{2}[4]{*}[2pt]{Attack} & \multicolumn{4}{c}{BERT} & \multicolumn{4}{c}{Infersent} & \multicolumn{4}{c}{ESIM} \\
			\cmidrule(lr){3-6} \cmidrule(lr){7-10} \cmidrule(lr){11-14}   &       &  {Suc. (\%)} & {Mod. (\%)} & {Sim.} &{\#Que.}  &  {Suc. (\%)} & {Mod. (\%)} & {Sim.} &{\#Que.} &  {Suc. (\%)} & {Mod. (\%)} & {Sim.} &{\#Que.}\\
			\midrule
			\multirow{5}[2]{*}[2pt]{SNLI} & GA    & 37.44 & 12.29 & 0.489 & 375.5& 44.18 & 12.74 & 0.517 & 271.8 & 32.84 & 13.03 & 0.498 & 379.8\\
			& PWWS  & 30.25 & 11.39 & 0.424 & {\cellcolor{gray!60}} 17.5 & 31.09 & 11.43 & 0.435 & {\cellcolor{gray!60}} 17.4 & 26.52 & 11.97 & 0.425 & {\cellcolor{gray!60}} 17.7\\
			& TextFooler & 56.02 & 11.67 & 0.490 & 119.5 & 59.86 & 11.94 & 0.538 & 115.3 &  52.51 & 12.10 & 0.524 & 126.0 \\
			& PSO   & 72.79 & 11.82 & 0.455 &445.6 & {\cellcolor{gray!60}}74.29 & 11.52 & 0.471 & 421.3 & {\cellcolor{gray!60}} 70.45 & 12.37 & 0.448 & 525.9\\
			& HydraText & {\cellcolor{gray!60}} 73.90 & {\cellcolor{gray!60}} 11.82 & {\cellcolor{gray!60}} 0.554 & 244.4 & {\cellcolor{gray!60}} 74.66 & {\cellcolor{gray!60}} 11.26 & {\cellcolor{gray!60}} 0.578 & 234.2 & {\cellcolor{gray!60}} 70.93 & {\cellcolor{gray!60}} 11.79 & {\cellcolor{gray!60}} 0.548 & 262.1 \\
			\midrule
			\multicolumn{14}{c}{{\textbf{Decision-based Setting}}} \\
			\midrule
			\multirow{2}[4]{*}[2pt]{Dataset} & \multirow{2}[4]{*}[2pt]{Attack} & \multicolumn{4}{c}{BERT} & \multicolumn{4}{c}{Infersent} & \multicolumn{4}{c}{ESIM} \\
			\cmidrule(lr){3-6} \cmidrule(lr){7-10} \cmidrule(lr){11-14}  &       &  {Suc. (\%)} & {Mod. (\%)} & {Sim.} &{\#Que.}  &  {Suc. (\%)} & {Mod. (\%)} & {Sim.} &{\#Que.} &  {Suc. (\%)} & {Mod. (\%)} & {Sim.} &{\#Que.} \\
			\midrule
			\multirow{2}[2]{*}[2pt]{SNLI} & GADe  & 70.82 & 12.96 & 0.473 & {\cellcolor{gray!60}} 1800.6 & 73.09 & 12.86 & 0.507 & {\cellcolor{gray!60}} 1657.7 & 68.07 & 13.70 & 0.494 & {\cellcolor{gray!60}} 1967.4\\
			& HydraText & {\cellcolor{gray!60}} 82.57 & {\cellcolor{gray!60}} 11.71 & {\cellcolor{gray!60}} 0.571 & 2680.7 & {\cellcolor{gray!60}} 84.27 & {\cellcolor{gray!60}} 11.44 & {\cellcolor{gray!60}} 0.610 & 4192.0 &{\cellcolor{gray!60}} 79.12 & {\cellcolor{gray!60}} 12.16 & {\cellcolor{gray!60}} 0.601 & 3823.6\\
			\bottomrule
		\end{tabular}%
	}
	\label{tab:targeted}
\end{table*}

\subsection{Targeted Attack}
\label{sec:targeted}
HydraText can also be applied to craft \textit{targeted} attacks, where the goal is to change the target model's prediction to a \textit{particular} wrong class.
We assessed the performance of HydraText and the baselines on the SNLI dataset.
Recall that each instance in SNLI comprises a premise-hypothesis sentence pair and is labeled one of three relations including entailment (ent.), contradiction (con.) and neutral (neu.).
Let $\mathbf{c}, \mathbf{c}_{tar}, \mathbf{c}_{adv}$ denote the original label, the target label, and the label of the AE $\mathbf{x}$, respectively.
Following \cite{ZangQYLZLS20}, we set $\mathbf{c}_{tar}$ as follows ($r$ is a random number sampled from $[0,1]$):
\begin{equation}
	\label{eq:score-based-target}
	\mathbf{c}_{tar}=\left\{\begin{array}{lll}
		\text{ent.}, & (\mathbf{c}=\text{con.})\lor(\mathbf{c}=\text{neu.}\land r\leq 0.5)\\
		\text{con.}, & (\mathbf{c}=\text{ent.})\lor(\mathbf{c}=\text{neu.}\land r>0.5)
	\end{array}.\right.
\end{equation}
In score-based setting, the objective function for the baselines is the predicted probability on the target label, i.e., $P(\mathbf{x},\mathbf{c}_{tar})$.
For HydraText, $f_1(S)$ is defined as follows:
\begin{equation}
	\textbf{Score-based}:\ f_1(S)=\left\{\begin{array}{ll}
		1, & \mathbf{c}_{tar} = \mathbf{c}_{adv}\\
		P(\mathbf{x},\mathbf{c}_{tar}), & \text{otherwise}
	\end{array}.\right.
\end{equation}
In decision-based setting, GADe will first initialize a $\mathbf{x}$ with $\mathbf{c}_{tar} = \mathbf{c}_{adv}$ by random word substitution.
If it does not find a solution that satisfies the requirement in the initialization phase, then the attack will fail.
For HydraText, $f_1(S)$ is defined as follows:
\begin{equation}
	\textbf{Decision-based}:\ f_1(S)=\left\{\begin{array}{ll}
		+\infty, & \mathbf{c}_{tar} = \mathbf{c}_{adv}\\
		|S|, & \text{otherwise}
	\end{array}.\right.
\end{equation}

Table~\ref{tab:targeted} presents the performance of HydraText and the baselines in crafting targeted attacks.
The results are similar to the ones in Table~\ref{tab:main_results}.
Compared to the baselines, HydraText could achieve quite competitive attack success rates and better attack imperceptibility.
One may observe that all these approaches performed worse in crafting targeted attacks than in crafting untargeted ones (see Table~\ref{tab:main_results}).
This is conceivable because a successful targeted attack is also a successful untargeted attack, while the opposite is unnecessarily true.

\section{Conclusion}
\label{sec:conclusion}
In this paper, we leveraged multi-objectivization to integrate attack imperceptibility into the problem of crafting adversarial textual attacks.
We proposed an algorithm, dubbed HydraText, to solve this problem.
Extensive experiments demonstrated that compared to existing textual attack approaches, HydraText could consistently craft more indistinguishable and natural AEs, without compromising on attack success rates.

The proposal of HydraText extends the realm of multi-objectivization to adversarial textual attack, where more objectives such as ﬂuency and naturality can be further incorporated.
Meanwhile, the strong performance of HydraText is suggestive of its promise in other domains such as image and speech, where attack imperceptibility is also an essential consideration.
Moreover, as discussed in Section~\ref{sec:results}, improving the query efficiency of HydraText is also worth studying.
Finally, it is also interesting to investigate how to automatically build an ensemble of textual attacks \cite{LiuT019,LiuP023} to reliably evaluate the adversarial robustness of NLP models.


\bibliographystyle{unsrtnat}
\bibliography{mybib}

\appendix

\begin{table*}[b]
	\color{black}
	\centering
	\caption{\color{black}The average computation time (in seconds) for 1000 queries to each target model on each dataset.}
	\scalebox{0.8}{
	  \begin{tabular}{cccc}
	  \toprule
			& BERT  & WordLSTM & WordCNN \\
	  \midrule
	  IMDB  & 8.98  & 1.66  & 0.33 \\
	  MR    & 11.02 & 2.26  & 1.13 \\
	  AG News & 12.41 & 1.56  & 0.73 \\
	  \midrule
			& BERT  & Infersent & ESIM \\
	  \midrule
	  SNLI  & 9.09  & 7.27  & 7.06 \\
	  MNLI  & 9.66  & 8.17  & 6.25 \\
	  \bottomrule
	  \end{tabular}}
	\label{tab:query_time}%
\end{table*}

{
\color{black}
\section{Computation Time}
For adversarial textual attack, model queries, i.e., fitness evaluations (FEs), dominate the computational costs.
Specifically, evaluating a solution (AE) means feeding it to the target model and then obtaining the model output.
Table~\ref{tab:query_time} lists the average computation time for 1000 queries to each target model on each dataset.
By combining it with the average query number consumed by the attack approaches in Table~\ref{tab:main_results}, one can obtain the total computation time consumed by each attack approach to attack a testing instance.
}

\section{Human Evaluation}
Figure~\ref{fig:human_eval} presents an example in the questionnaire survey of the human evaluation study.

\label{appendix:humaneval}
\begin{figure}[b]
\centering
\begin{subfigure}[b]{0.4\linewidth}
	\centering
	\scalebox{1.0}{\includegraphics[width=\linewidth]{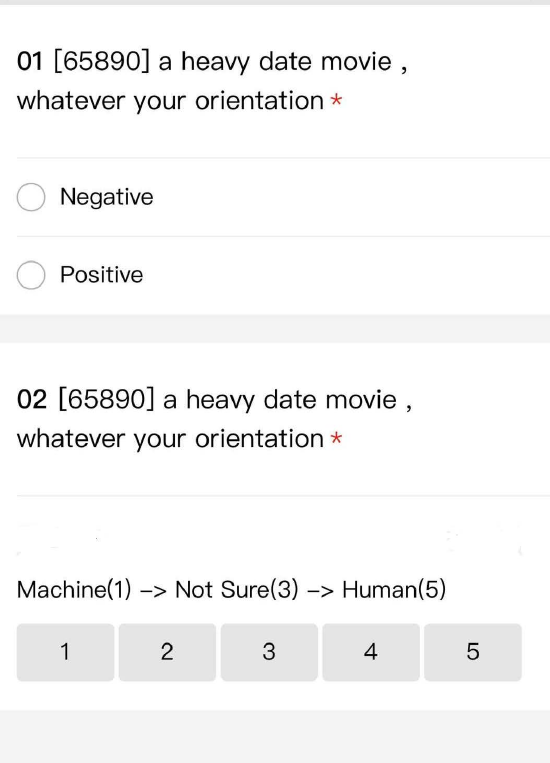}}
\end{subfigure}
\caption{An example in the questionnaire survey.}
\label{fig:human_eval}
\end{figure}







\end{document}